\documentclass[11pt,a4paper]{article}
\usepackage[T1]{fontenc}
\usepackage[utf8]{inputenc}
\usepackage[a4paper,margin=26mm,headheight=14pt]{geometry}
\usepackage{microtype}
\usepackage{amsmath,amsthm,mathtools}
\usepackage{newtxtext,newtxmath}
\pdfglyphtounicode{parenleftbig}{0028}
\pdfglyphtounicode{parenrightbig}{0029}
\usepackage{graphicx,booktabs,array,tabularx,longtable,multirow}
\usepackage{algorithm}
\usepackage[noend]{algpseudocode}
\usepackage{flafter,placeins}
\usepackage{caption}
\usepackage[round,authoryear]{natbib}
\usepackage{xurl}
\usepackage[hidelinks]{hyperref}
\usepackage{fancyhdr}
\usepackage{needspace,etoolbox}
\AtBeginEnvironment{thebibliography}{\raggedright}
\hypersetup{pdftitle={HAARES: Half-Split Residual Basis Routing for Deep Transformers},
 pdfauthor={Kehan Wang; Tingqiong Cui; Yibu Yang; Yanfei Hao; Chenjie Liu; Shifeng Wu; Zhenzhang Li},
 pdfkeywords={Transformers; residual connections; depth-wise routing; language modeling; Haar-like representations}}

\newcommand{\method}{\ifmmode\mathrm{HAARES}\else HAARES\fi}
\newcommand{\RMS}{\operatorname{RMS}}
\newcommand{\RMSNorm}{\operatorname{RMSNorm}}
\newcommand{\sg}{\operatorname{sg}}
\newcommand{\clip}{\operatorname{clip}}

\newtheorem{proposition}{Proposition}
\BeforeBeginEnvironment{proposition}{\Needspace{7\baselineskip}}
\fancypagestyle{plain}{\fancyhf{}\fancyfoot[C]{\thepage}}

\title{\method{}: Half-Split Residual Basis Routing\\for Deep Transformers}
\author{Kehan Wang$^{1,\dagger}$, Tingqiong Cui$^{3,\dagger}$, Yibu Yang$^{3}$,\\
Yanfei Hao$^{3}$, Chenjie Liu$^{3}$, Shifeng Wu$^{2,*}$, and Zhenzhang Li$^{2,*}$\\[6pt]
\small $^{1}$School of Mechanical Engineering, Chongqing University, Huxi Campus,\\
\small Chongqing 401331, China\\[2pt]
\small $^{2}$School of Computer Science, Guangdong Polytechnic Normal University,\\
\small No. 293 Zhongshan Avenue West, Guangzhou 510665, Guangdong, China\\[2pt]
\small $^{3}$Technology R\&D Department, Chongqing CRRC Group Co., Ltd.,\\
\small No. 99 Shujugu Middle Road, Chongqing 401122, China\\[4pt]
\small $^{\dagger}$These authors contributed equally.\\
\small $^{*}$Correspondence: \href{mailto:fengtree@126.com}{fengtree@126.com} (Shifeng Wu);\\
\small \href{mailto:zhenzhangli@gpnu.edu.cn}{zhenzhangli@gpnu.edu.cn} (Zhenzhang Li)}
\date{}
\begin{document}
\maketitle
\begin{abstract}
Block-level residual routing compresses ordered Transformer sublayer outputs into a single sum, preserving their total but discarding their allocation within the block. We introduce HAARES, which retains the cumulative source and adds the first-half sum minus the second-half sum. The raw pair recovers both half-block totals exactly. A bounded root-mean-square matching operation prepares the detail for the existing depth router, while the final readout remains cumulative-only. In 48-layer OpenWebText experiments, HAARES reduces mean best validation loss by 0.0313 at 201M parameters, 0.0233 at 453M parameters, and 0.0150 with a 4096-entry byte-pair tokenizer. All eight paired runs improve over Block Attention Residuals. Mean losses also decrease on four separately trained 48-layer domain benchmarks. The depth and granularity studies locate the strongest benefit in the four-block, 48-layer configuration: shallower results are mixed, and six- or eight-block partitions reverse the mean advantage. Duplicated cumulative sources recover part of the improvement, but the half-split detail performs better in every ablation seed; the tested random-sign detail does not reproduce the gain. Scale matching and learned detail biases further improve the result. Additional analytical arithmetic is approximately $0.2$--$0.5\%$, while the unfused implementation increases peak memory by $15$--$33\%$ and throughput-derived step time by $17$--$23\%$. HAARES improves validation quality in deep models with long residual blocks, at a measurable memory and runtime cost.
\end{abstract}
\noindent\textbf{Keywords:} Transformers; residual connections; depth-wise routing; language modeling; Haar-like representations

\section{Introduction}

A block residual router can decide how much to use an earlier block, but its decision is limited by what that block stores. Block Attention Residuals summarize a group of attention and multilayer-perceptron (MLP) outputs by their sum \citep{kimi2026attnres}. This reduces the number of sources read by later sublayers. It also makes two output sequences indistinguishable whenever they have the same total, even if their early and late contributions differ substantially.

Consider a block that produces a large early contribution and partly cancels it later. Its cumulative source can equal that of a block with a smaller early contribution and little late activity. The sum exposes the endpoint of this allocation, not its two parts. Increasing the expressiveness of the scoring rule cannot retrieve a distinction that its input source has already discarded. This limitation concerns what a block retains: changing how later layers weight its sum cannot recover the missing allocation.

We introduce \method{}, which augments the cumulative source with the first-half output sum minus the second-half sum. The two raw coordinates recover the early and late totals exactly by addition and subtraction. This is the coarsest sum--difference construction associated with a Haar-like representation along depth \citep{mallat1989theory}. It gives the router a structured intra-block contrast while keeping the source count proportional to the number of blocks.

The choice of one fixed split keeps the intervention small. Both coordinates are accumulated online from the same sublayer outputs. The cumulative source is unchanged; the detail is magnitude-matched to it within a bounded range and receives a learned routing bias. Intermediate attention and MLP sublayers read the completed pairs and the current written prefix. The final readout still uses only cumulative sources. Figure~\ref{fig:architecture} follows this construction through source preparation, routing, and write-back.

This design occupies a different point from retaining every sublayer output. With four blocks and 48 Transformer layers, one cumulative source summarizes 24 residual events. HAARES adds one early--late contrast to that block instead of exposing all 24 outputs individually. The raw pair retains the two half totals but still compresses variation within each half. The router gains access to the two half totals while using a much smaller bank than per-sublayer storage would require.

The experiments identify where that choice is useful. Under a matched training recipe, the medium-width OpenWebText model improves by 0.0078 in mean validation loss at 12 layers, is slightly worse by 0.0016 at 24 layers, and improves by 0.0313 at 48 layers. At 48 layers, every paired seed favors HAARES in the 201M, 453M, and BPE comparisons. Four additional corpora, each trained separately, also favor HAARES at 48 layers. The effect is strongest and most consistent when each of four blocks compresses a long residual sequence.

The granularity and source controls make this result more informative than a single model comparison. At fixed 48-layer depth, increasing the block count to six or eight reverses the mean HAARES advantage over the same-partition parent. Duplicating cumulative sources improves the parent, but full HAARES improves further in all three ablation seeds. A fixed random-sign detail does not reproduce the result. These controls distinguish the benefit of another weighted slot from the benefit of the chosen early--late source.

The raw representation and the routed computation require separate analysis. Recoverability of the half totals is exact, but a softmax router acts on normalized keys and nonnegative weights over supplied values. Its behavior depends on the coordinates in which those values are represented. We derive how a detail changes the effective coefficients of the two halves, why equal-valued duplicate slots can affect allocation, and why a half-split block is not operationally equivalent to twice as many cumulative blocks. These properties connect the source definition to the alternatives tested in the ablations.

The computational trade is equally concrete. HAARES adds few parameters and approximately $0.2$--$0.5\%$ analytical arithmetic, yet the unfused implementation incurs a larger activation and runtime cost. We report measured peak memory and throughput alongside validation loss. Existing checkpoint crossings show that earlier attainment of the baseline's budget-best quality can offset the slower steps relative to a full baseline run, while the additional memory remains necessary.

The resulting contribution is a source representation for block residual routing, an exact account of its retained resolution and mixture geometry, and matched evidence across depth, width, tokenization, corpora, and design controls. The successful configurations and the reversals under finer partitions help identify when the added contrast is useful and when cumulative sources alone perform better.

\paragraph{Relationship to the preliminary version.}
A preliminary version appeared as an arXiv preprint \citep{wang2026haarespreprint}. This manuscript develops the raw-summary and routing analysis, clarifies the relationship to other residual architectures, and consolidates the existing seed-level, domain, ablation, and cost results. The empirical results are shared with the preliminary version; the extensions concern their analysis and presentation.

\section{Related work}

\subsection{Residual scaling and cross-layer access}

Residual connections provide additive paths through deep networks \citep{he2016deep}, and their interaction with normalization affects Transformer optimization \citep{vaswani2017attention,xiong2020layer}. ReZero learns zero-initialized scalar residual gates \citep{bachlechner2021rezero}. LayerScale learns channelwise scales initialized to small values \citep{touvron2021going}, while DeepNet changes residual scaling and initialization to train much deeper stacks \citep{wang2022deepnet}. These methods regulate the updates made to a residual stream. HAARES addresses the representation of a group of updates once that group becomes a source for later sublayers.

Cross-layer aggregation changes access to earlier computation. Transparent Attention mixes encoder layers for neural machine translation \citep{bapna2018transparent}. DenseFormer learns depth-weighted averages of current and earlier representations \citep{pagliardini2024denseformer}. MUDDFormer generates token-dependent dense-connection weights for separate query, key, value, and residual streams \citep{xiao2025muddformer}. These methods let later computations select among earlier representations. HAARES retains a block router and changes the summaries it can reuse, rather than introducing multiway streams or a new connection-weight generator.

The storage and weighting questions are related but distinct. Retaining many earlier representations gives a weighting mechanism finer access at greater source cost. Summing several outputs into one block source reduces that cost, but subsequent weights act on the aggregate. HAARES uses two structured coordinates per block to retain more intra-block information while preserving a block-level interface. We use the cumulative-only parent as the direct experimental comparison. Standard Residual, ReZero, and LayerScale provide comparisons with other residual parameterizations under the same training recipe.

\subsection{Attention Residuals and delta routing}

Attention Residuals replace uniform accumulation with softmax-weighted reuse of preceding sublayer outputs. Their block variant forms local sums of outputs within contiguous groups and routes over those sources \citep{kimi2026attnres}. A cumulative block source is the sum within that block, not a hidden state accumulated from the embedding through all preceding layers. HAARES inherits this local block-sum organization, the learned read queries, and the replacement-style input mixture.

Delta Attention Residuals route residual changes at sublayer or block granularity and add the routed contribution to a preserved residual stream \citep{luo2026delta}. HAARES instead pairs a local block sum with its intra-block early--late contrast and retains the replacement-style read. These differences concern both the stored quantity and how it enters computation. Under routed inputs, a sublayer output need not equal the difference between two consecutive routed hidden states. Block-level Delta routing retains these differences in both source definition and residual update.

Four architectural choices organize the comparison: the stored sources, the rule that weights them, the use of the mixture as a replacement or an additive correction, and the source family used by the final readout. HAARES modifies the first choice within Block AttnRes and adds detail-specific scale and bias handling. Its final readout continues to use cumulative sources only. This localization makes the parent comparison informative about the proposed source addition. DenseFormer, MUDDFormer, and Delta Attention Residuals are discussed here as architectural relatives, not included as experimental baselines.

\subsection{The Haar-like sum--difference construction}

A Haar scaling/detail pair separates the sum of two adjacent regions from their difference \citep{mallat1989theory}. HAARES applies that construction to vector-valued sublayer outputs along depth. With equal-length halves, its raw coordinates are proportional to the coarsest Haar coefficients. With unequal halves, the definition retains unweighted sums, assigning the extra event to the early half.

The representation contains one split per block. It is neither a multilevel decomposition nor an orthonormal transform after RMS matching. The relevant property is the exact inverse between two half totals and their raw sum and difference. The router then acts on those supplied directions through normalized keys and nonnegative weights. This separates the role of the Haar-like representation from the distinct question of which mixtures the network can produce.

\section{Half-split residual basis routing}
\label{sec:method}

HAARES follows the four stages in Figure~\ref{fig:architecture}: collect ordered sublayer outputs, construct cumulative and early--late sources, prepare and route those sources, and write the next sublayer output back to the active block. The change is confined to the source bank. Attention, MLP transformations, and the cumulative-only final readout retain the parent architecture's interfaces.

The diagram separates a block's stored representation from the computation that produces it. A completed block supplies a fixed pair to later read sites. An active block supplies only the prefix already written. Each new attention or MLP output extends that prefix, after which the next read can use the updated pair. Thus the panel sequence describes one pass through source construction and use; the write-back arrow advances to the next residual event.

\subsection{Ordered sublayer outputs}

Consider a decoder-only Transformer with $L$ layers, hidden width $d$, batch size $B$, and sequence length $T$. Attention and MLP sublayers form $2L$ ordered residual events. We partition these events into $N$ contiguous blocks, each containing $m=2L/N$ events in the equal-sized configurations used here. A block in this partition can contain several Transformer layers; it is the unit of residual storage, rather than an additional neural transformation.

At event $r$ of block $n$, the depth router supplies $\widetilde h_{n,r}\in\mathbb R^{B\times T\times d}$. The attention or MLP transformation $f_{n,r}$ produces
\begin{equation}
 u_{n,r}=f_{n,r}\!\left(\operatorname{Norm}(\widetilde h_{n,r})\right).
 \label{eq:sublayer}
\end{equation}
The ordered outputs $(u_{n,1},\ldots,u_{n,m})$ are the trajectory summarized by HAARES. Each $u_{n,r}$ is written to the block accumulator. Successive sublayers have different depth routers, so their routed inputs need not differ by $u_{n,r}$; defining the trajectory through the sublayer outputs fixes that ambiguity.

The baseline compresses all $m$ outputs into their sum. At one token position this maps $md$ coordinates to $d$ coordinates. Perturbations $(v_1,\ldots,v_m)$ with $\sum_r v_r=0$ leave the source unchanged. Once such outputs have been summed, a router receiving only the total has no access to their allocation within the block. HAARES retains one additional projection of this same sequence: a positive sum over its early portion and a negative sum over its late portion.

The split is positional. Attention and MLP outputs can appear in either half, and no semantic role is assigned to the halves. For the default 48-layer, four-block model, each block groups 24 residual events, with 12 on each side of the split. This groups six Transformer layers per half while adding only one detail source per block.

\begin{figure}[!tbp]
\centering
\includegraphics[width=119mm]{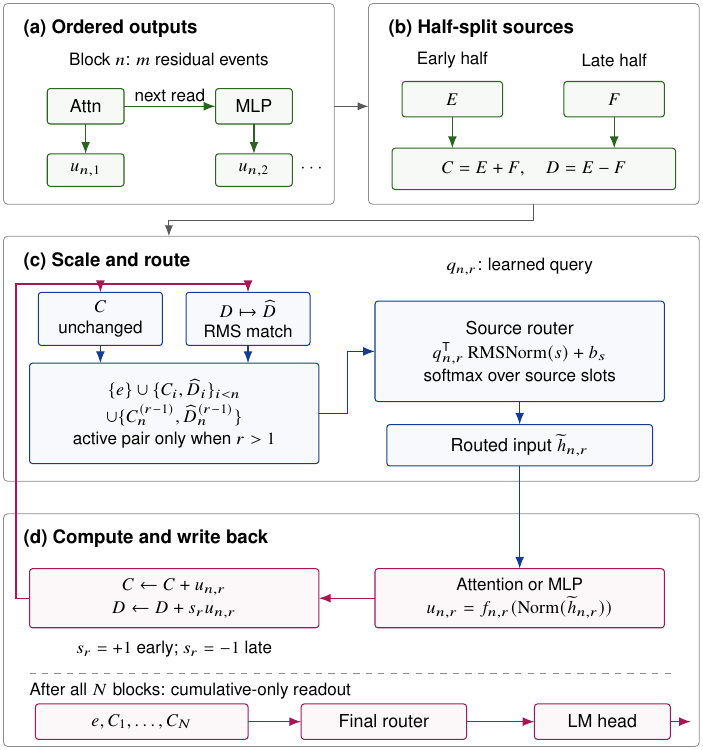}
\caption{HAARES information flow. (a) A block groups ordered attention and MLP outputs. (b) The early and late totals produce $C=E+F$ and $D=E-F$. (c) Only $D$ is RMS-matched; a learned read-site query scores the embedding, completed pairs, and the active written prefix to form $\widetilde h_{n,r}$. (d) The sublayer output updates both raw accumulators, and the return path prepares the next read. The separate end-of-stack router receives only $e,C_1,\ldots,C_N$ before the language-model head}
\label{fig:architecture}
\end{figure}

\subsection{Cumulative and early--late source construction}

Let $k=\lceil m/2\rceil$. Define the early and late totals by
\begin{equation}
 E_n=\sum_{r=1}^{k}u_{n,r},\qquad
 F_n=\sum_{r=k+1}^{m}u_{n,r}.
 \label{eq:segments}
\end{equation}
The stored raw coordinates are
\begin{equation}
 C_n=E_n+F_n,\qquad D_n=E_n-F_n.
 \label{eq:bases}
\end{equation}
Both coordinates depend on both halves: $C_n$ retains their total contribution, while $D_n$ separates their allocation. For odd $m$, the early half contains the extra event; every reported configuration has even $m$. The cumulative coordinate is exactly the one stored by Block AttnRes. Table~\ref{tab:interface} identifies the additions to that interface.

\begin{table}[tbp]
\centering
\caption{The source interface. A ``pair'' refers to two indexed source slots, even when their tensor values coincide. The embedding is available in both methods.}
\label{tab:interface}
\small
\begin{tabularx}{\textwidth}{@{}p{0.27\textwidth}p{0.27\textwidth}X@{}}
\toprule
Quantity & Block AttnRes & HAARES \\
\midrule
Completed block & $C_n=E_n+F_n$ & $C_n$ and $\widehat D_n$, with raw $D_n=E_n-F_n$ \\
Active block before event $r>1$ & $C_n^{(r-1)}$ & $C_n^{(r-1)}$ and $\widehat D_n^{(r-1)}$ \\
Detail magnitude & Not applicable & Bounded RMS matching to $C$; scale detached from gradients \\
Intermediate input & Softmax mixture of sources & Same mixture rule, with a learned bias for detail slots \\
Final readout & Embedding and completed $C_n$ & Unchanged; no detail slots \\
Information in raw summary & Total update & Separate early and late totals; no order within either half \\
\bottomrule
\end{tabularx}
\end{table}

The pair can be maintained as outputs arrive. Before event $r$, the active block contains
\begin{equation}
 C_n^{(r-1)}=\sum_{t<r}u_{n,t},\qquad
 D_n^{(r-1)}=\sum_{t<r}s_tu_{n,t},\qquad
 s_t=\begin{cases}+1,&t\leq k,\\-1,&t>k.\end{cases}
 \label{eq:partial}
\end{equation}
After the next output is computed, the two writes are
\begin{equation}
 C_n^{(r)}=C_n^{(r-1)}+u_{n,r},\qquad
 D_n^{(r)}=D_n^{(r-1)}+s_ru_{n,r}.
 \label{eq:update}
\end{equation}
Starting from zero, induction recovers~\eqref{eq:partial} at each event and~\eqref{eq:bases} at the block boundary. Two running accumulators therefore suffice for the forward summary construction. Training also retains the computation graph required by backpropagation; the measured activation cost is reported separately in Section~\ref{sec:efficiency}.

During the early half, the two raw accumulators coincide. A late output is then added to $C$ and subtracted from $D$. After $k+j$ outputs, write the observed late total as $F_{n,j}=\sum_{t=k+1}^{k+j}u_{n,t}$. The active coordinates are
\begin{equation}
 C_n^{(k+j)}=E_n+F_{n,j},\qquad
 D_n^{(k+j)}=E_n-F_{n,j},\qquad 0\leq j\leq m-k.
 \label{eq:onlinehalves}
\end{equation}
The completed early contribution remains encoded throughout the late half, without a third persistent accumulator. Each additional late output changes the comparison against that same early total. At the block boundary, the pair is cached for later blocks and the active accumulators are reset.

Table~\ref{tab:timing} makes the read timing explicit. The first late event, $r=k+1$, still reads the early-only pair. Its output creates the first early--late contrast, which becomes available at $r=k+2$. This one-event delay follows the read--compute--write order and keeps every read dependent only on preceding outputs. An attention output is available to the following MLP as soon as it has been written.

\begin{table}[tbp]
\centering
\caption{Read timing in the first block, with $m\geq4$ and $k=\lceil m/2\rceil$. Entries show the raw active accumulators before the listed event. Whenever a detail slot is exposed, $D$ is RMS-matched first. The embedding is always available.}
\label{tab:timing}
\small
\begin{tabularx}{\textwidth}{@{}p{.19\textwidth}p{.20\textwidth}p{.20\textwidth}X@{}}
\toprule
Read site & Active $C$ & Active $D$ & What the read can use\\
\midrule
$r=1$ & $0$ & $0$ & No active slots; embedding only\\
$r=2$ & $u_{1,1}$ & $u_{1,1}$ & Two raw values coincide\\
$r=k+1$ & $E_1$ & $E_1$ & Early half complete; no late output yet\\
$r=k+2$ & $E_1+u_{1,k+1}$ & $E_1-u_{1,k+1}$ & First early--late contrast is visible\\
Block 2, $r=1$ & Reset to $0$ & Reset to $0$ & Completed $(C_1,\widehat D_1)$ remains; no new active slots\\
\bottomrule
\end{tabularx}
\end{table}

\subsection{Magnitude matching and depth-wise routing}

A signed difference can have a different magnitude from its corresponding sum. We retain $C$ unchanged and prepare the detail value as
\begin{equation}
 \widehat D=D\cdot\sg\!\left[\clip\!\left(
 \frac{\RMS(C)}{\RMS(D)+\epsilon},\frac1\gamma,\gamma\right)\right],
 \qquad \epsilon=10^{-6},\quad\gamma=4.
 \label{eq:rmsmatch}
\end{equation}
Here $\sg$ detaches the multiplier from the gradient graph. The multiplier stays in $[1/4,4]$, moving the detail toward the cumulative magnitude while limiting amplification and attenuation. Clipping and the denominator offset make this bounded magnitude matching rather than exact RMS equality.

Magnitude matching and router key normalization serve different purposes. The former changes the value supplied to the mixture. The latter constructs the features used to score a source. Normalizing keys alone would leave a large detail value large in the weighted sum. A type bias changes its softmax weight instead of its tensor magnitude. The definition and ablations treat value matching, key normalization, and the detail bias separately.

Before event $(n,r)$, the source bank is the indexed collection
\begin{equation}
\begin{aligned}
 \mathcal S_{n,r}&=\{e\}\cup\{C_i,\widehat D_i\}_{i<n}\cup\mathcal P_{n,r},\\
 \mathcal P_{n,r}&=\begin{cases}
 \varnothing,&r=1,\\
 \{C_n^{(r-1)},\widehat D_n^{(r-1)}\},&r>1.
 \end{cases}
\end{aligned}
\label{eq:sourceset}
\end{equation}
The embedding $e$ is always present. Completed blocks supply their cached pairs, and the active block contributes only after its first output. The braces denote source slots: coincident tensor values remain separate candidates. In particular, the early-only active pair occupies two slots even though its raw coordinates coincide.

For a slot $s$, the router computes
\begin{equation}
 z_s=q^\top\RMSNorm(s)+b_s,\qquad
 \alpha_s=\frac{\exp z_s}{\sum_{s'\in\mathcal S_{n,r}}\exp z_{s'}},\qquad
 \widetilde h_{n,r}=\sum_{s\in\mathcal S_{n,r}}\alpha_s s.
 \label{eq:routingweights}
\end{equation}
Each attention and MLP read site has its own learned query $q\in\mathbb R^d$. It is shared across sequence positions, with token-dependent scores obtained from the source keys. Thus the query is a model parameter, while the depth weights respond to the token representations. The depth softmax mixes source slots at a token position; causal self-attention within $f_{n,r}$ remains responsible for interaction across positions.

Queries start at zero. Embedding and cumulative slots have zero bias. Each sublayer router has one learned scalar detail bias, shared by its visible detail slots and initialized at $-2$. With $c$ embedding/cumulative slots and $h$ detail slots, the initial aggregate detail weight is
\begin{equation}
 p_D^{(0)}=\frac{h\exp(-2)}{c+h\exp(-2)}.
 \label{eq:initialmass}
\end{equation}
This gives a soft initial preference for cumulative sources. For example, the first active read has two embedding/cumulative slots and one detail slot, so the detail group initially receives about $6.3\%$ of the mixture. In the last block of a four-block model, an active read has five embedding/cumulative and four detail slots, giving about $9.8\%$. These values follow from the initialization and slot counts; learned scores and biases subsequently set the allocation.

The clipping rule also specifies the degenerate cases. If $C=0$ and $D\ne0$, it retains $\widehat D=D/4$, preserving a nonzero candidate when the total cancels. If $D=0$, the value stays zero, but its indexed slot still contributes to softmax normalization. For a detached multiplier $a$, the direct derivative of the scaling operation with respect to $D$ is $aI$; there is no derivative through $a$'s dependence on either accumulator. The recorded RMS formula does not identify its reduction axes. That implementation detail is listed with the other reproducibility items in Appendix~\ref{app:repro}.

\subsection{Sublayer update and final readout}

The mixture $\widetilde h_{n,r}$ is the input to the ordinary normalized attention or MLP transformation in~\eqref{eq:sublayer}. Its output $u_{n,r}$ updates the two accumulators through~\eqref{eq:update}. There is no separate operation that adds the mixture to a retained hidden stream. The write-back path in Figure~\ref{fig:architecture} carries only the newly computed sublayer output to the accumulators; it does not introduce a second residual branch.

Algorithm~\ref{alg:haares} states the execution order. One attention or MLP transformation is evaluated at each event. The same output enters $C$ with a positive sign and $D$ with the half-dependent sign. Preparing $\widehat D$ after the write makes the active pair ready for the next read. Completed pairs are reused by downstream routers without recomputing the earlier sublayers.

\begin{algorithm}[!htbp]
\caption{HAARES execution for block $n$}
\label{alg:haares}
\begin{algorithmic}[1]
\Require embedding $e$, completed pairs $\{C_i,\widehat D_i\}_{i<n}$, sublayers $f_{n,1:m}$
\State $C\gets0$; $D\gets0$; $k\gets\lceil m/2\rceil$
\For{$r=1,\ldots,m$}
 \State Assemble $\mathcal S_{n,r}$ using~\eqref{eq:sourceset}; omit the active pair when $r=1$
 \State $\widetilde h\gets\operatorname{Route}_{n,r}(\mathcal S_{n,r})$
 \State $u\gets f_{n,r}(\operatorname{Norm}(\widetilde h))$
 \State $C\gets C+u$
 \State $D\gets D+u$ if $r\leq k$, otherwise $D\gets D-u$
 \State $\widehat D\gets\operatorname{RMSMatch}(D;C)$ using~\eqref{eq:rmsmatch}
\EndFor
\State \Return completed pair $(C,\widehat D)$
\end{algorithmic}
\end{algorithm}

The matched Block AttnRes baseline uses the same read rule with every detail slot removed. After the last block, both architectures use a separate final router over
\begin{equation}
 \mathcal S_{\mathrm{final}}=\{e,C_1,\ldots,C_N\},\qquad
 h_{\mathrm{final}}=\operatorname{Route}_{\mathrm{final}}(\mathcal S_{\mathrm{final}}).
 \label{eq:final}
\end{equation}
The tied language-model head maps $h_{\mathrm{final}}$ to token logits. Details affect the predictions through the intermediate sublayer outputs that build the cumulative sources. They are not candidates in the final readout. This preserves the parent's output interface while placing the extra early--late information at the inputs of the attention and MLP computations.

\FloatBarrier
\section{Information retained by the half split}
\label{sec:analysis}

The half split changes the coordinates available to a depth router. We first characterize the raw summary for fixed realized sublayer outputs, then examine how magnitude matching and softmax weighting use those coordinates. Fixing the outputs is essential here: the analysis concerns compression of a computed trajectory, whereas changing a neural sublayer also changes the outputs computed after it.

\subsection{Recovery of half-block totals}

\begin{proposition}[Recovery of half-block totals]
The raw pair $(C_n,D_n)$ uniquely determines $(E_n,F_n)$ through
\begin{equation}
 E_n=\frac{C_n+D_n}{2},\qquad F_n=\frac{C_n-D_n}{2}.
 \label{eq:inverse}
\end{equation}
\end{proposition}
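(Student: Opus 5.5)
The plan is to treat the statement as a linear inversion performed coordinatewise, for fixed realized sublayer outputs as stipulated at the start of Section~\ref{sec:analysis}. By~\eqref{eq:segments}, $E_n$ and $F_n$ are tensors in $\mathbb R^{B\times T\times d}$, and by~\eqref{eq:bases} the pair is obtained as $(C_n,D_n)=(E_n+F_n,\,E_n-F_n)$. At every batch index, position and channel, this is the action of the fixed $2\times 2$ matrix $M=\begin{pmatrix}1&1\\1&-1\end{pmatrix}$ on the vector $(E_n,F_n)$. The argument therefore reduces to showing that $M$ is invertible and that the inverse has the form given in~\eqref{eq:inverse}.

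\textbf{Existence.} Substitute~\eqref{eq:bases} into the right-hand sides of~\eqref{eq:inverse}. This gives $(C_n+D_n)/2=(2E_n)/2=E_n$ and $(C_n-D_n)/2=(2F_n)/2=F_n$. Both operations are entrywise tensor arithmetic, so the identities hold in every coordinate.

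\textbf{Uniqueness.} Suppose two candidate pairs $(E,F)$ and $(E',F')$ produce the same $(C_n,D_n)$. Subtracting gives $E-E'+F-F'=0$ and $E-E'-(F-F')=0$. Adding these two equations yields $2(E-E')=0$, and subtracting them yields $2(F-F')=0$. Equivalently, $\det M=-2\neq0$, so $M$ is injective. This is exact over the reals. It also holds in any arithmetic where division by $2$ is exact, which includes binary floating point, where halving is exact barring underflow.

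I expect no real obstacle here. The only points to state carefully are the following.
\begin{itemize}
\item The claim concerns the \emph{raw} $D_n$, not the RMS-matched $\widehat D_n$ from~\eqref{eq:rmsmatch}. The detached clipped multiplier is strictly positive, since it lies in $[1/4,4]$. So $D_n$ is recoverable from $\widehat D_n$ only if that multiplier is known, and I would note that the proposition makes no claim about the post-matching pair.
\item The result holds for the odd-$m$ convention $k=\lceil m/2\rceil$ as well, because the inversion never uses the half lengths.
\item By~\eqref{eq:onlinehalves}, the same inversion recovers $(E_n,F_{n,j})$ from the active accumulators at every intermediate event.
\end{itemize}
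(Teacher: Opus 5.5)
Your argument is correct and is essentially the paper's proof, which adds and subtracts $C_n=E_n+F_n$ and $D_n=E_n-F_n$ to obtain $2E_n$ and $2F_n$; your uniqueness step is the same computation, phrased as injectivity of the $2\times2$ sum--difference matrix. Drop the aside that the recovery is exact in binary floating point: halving is exact, but $C_n$, $D_n$ and $C_n\pm D_n$ are themselves rounded sums (in fp32, $(E,F)=(1,2^{-60})$ and $(1,0)$ both give $(C,D)=(1,1)$), so the proposition holds in exact real arithmetic, which is how the paper intends it.
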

\begin{proof}
Adding the defining equations $C_n=E_n+F_n$ and $D_n=E_n-F_n$ gives $2E_n$; subtracting them gives $2F_n$.
\end{proof}

The pair therefore preserves the two half totals exactly. A cumulative source alone identifies every pair $(E_n+a,F_n-a)$ with $(E_n,F_n)$, for arbitrary $a\in\mathbb R^d$ at a token position. Adding the detail distinguishes these pairs because the detail changes by $2a$. This is the information gained by the additional source: the total update can stay fixed while its allocation between the two halves changes.

For arbitrary scalar coefficients $a,b$, the same inverse gives
\begin{equation}
 aE_n+bF_n=\frac{a+b}{2}C_n+\frac{a-b}{2}D_n.
 \label{eq:span}
\end{equation}
The raw sum--difference coordinates thus span every linear combination of the two half totals. Two $d$-dimensional vectors are sufficient for arbitrary pairs of $d$-dimensional totals. To see why one linear summary is insufficient, stack $(E,F)$ into a $2d$-dimensional vector. The cumulative map has matrix $[I\ I]$ and kernel $\{(a,-a):a\in\mathbb R^d\}$, of dimension $d$. The forward sum--difference map and its inverse are
\begin{equation}
 \begin{bmatrix}I&I\\ I&-I\end{bmatrix},\qquad
 \frac12\begin{bmatrix}I&I\\ I&-I\end{bmatrix}.
 \label{eq:blockmatrix}
\end{equation}
Their product is the identity on $\mathbb R^{2d}$. Recovery holds even when $C$ and $D$ happen to be collinear for a particular input.

For example, let $a\in\mathbb R^d$ be nonzero and consider
\begin{equation}
 (E,F)=(3a,-2a),\qquad (E',F')=(a,0).
 \label{eq:example}
\end{equation}
Both pairs give $C=C'=a$, while their details are $5a$ and $a$. The first pair contains a larger early contribution that is partly cancelled by the late contribution. The second reaches the same total entirely in the early half. Their cumulative sources are indistinguishable, but their raw pairs retain the difference. The example illustrates which allocations the summary can distinguish. The experiments evaluate trained models, not the frequency of this particular cancellation pattern.

Expanding the squared norms yields
\begin{equation}
 \lVert C_n\rVert_2^2+\lVert D_n\rVert_2^2
 =2\bigl(\lVert E_n\rVert_2^2+\lVert F_n\rVert_2^2\bigr),
 \label{eq:energy}
\end{equation}
with tensor norms interpreted after flattening. The cross terms cancel in the sum. Subtracting the two squared norms instead gives
\begin{equation}
 \lVert D_n\rVert_2^2-\lVert C_n\rVert_2^2=-4\langle E_n,F_n\rangle.
 \label{eq:contrastenergy}
\end{equation}
Opposing half totals can therefore produce a large detail relative to the cumulative source, while reinforcing totals can produce a small detail. The matching operation in~\eqref{eq:rmsmatch} controls the magnitude presented to the router after this contrast has been formed.

\Needspace{11\baselineskip}
\subsection{Resolution along the block}

\begin{proposition}[Resolution under permutation]
For a fixed realized sequence of outputs, $C_n$ is unchanged by any permutation. The raw pair $(C_n,D_n)$ is unchanged by permutations within either half. Exchanging an early output $v$ with a late output $w$ leaves $C_n$ unchanged and changes $D_n$ to $D_n+2(w-v)$.
\end{proposition}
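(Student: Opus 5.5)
The plan is to work directly from the signed-sum form of the raw pair, namely $C_n=\sum_{t=1}^m u_{n,t}$ and $D_n=\sum_{t=1}^m s_t u_{n,t}$ with $s_t=+1$ for $t\le k$ and $s_t=-1$ for $t>k$, as in~\eqref{eq:partial} at $r=m+1$. The realized outputs $u_{n,t}$ are held fixed. A permutation $\pi$ of $\{1,\ldots,m\}$ only reassigns which output occupies which position, so the permuted pair is $C_n^\pi=\sum_t u_{n,\pi(t)}$ and $D_n^\pi=\sum_t s_t u_{n,\pi(t)}$. We are not recomputing any sublayer, which matches the fixed-trajectory convention stated at the start of Section~\ref{sec:analysis}. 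Each of the three claims then reduces to a reindexing of a finite sum.

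For the first claim, $C_n^\pi=C_n$ because vector addition is commutative and associative, so reindexing the sum by $\pi$ does not change it.

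For the second claim, suppose $\pi$ maps $\{1,\ldots,k\}$ to itself, and hence also maps $\{k+1,\ldots,m\}$ to itself. Then $s_{\pi^{-1}(t)}=s_t$ for every $t$. Splitting $D_n^\pi$ as the early partial sum minus the late partial sum, each piece is a reindexed sum over the same index set. This gives $E_n^\pi=E_n$ and $F_n^\pi=F_n$, so $D_n^\pi=D_n$. Combined with the first claim, the whole pair is unchanged. Alternatively, one can note that the pair is a function of $(E_n,F_n)$ through~\eqref{eq:bases}.

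For the third claim, let $v=u_{n,i}$ with $i\le k$ and $w=u_{n,j}$ with $j>k$, and let $\pi$ be the transposition of $i$ and $j$. $C_n$ is unchanged by the first claim. For $D_n$, the early total becomes $E_n-v+w$ and the late total becomes $F_n-w+v$. The new detail is therefore $(E_n-v+w)-(F_n-w+v)=D_n+2(w-v)$. Equivalently, only the two terms $s_iv+s_jw=v-w$ change, becoming $w-v$, which gives a difference of $2(w-v)$.

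No step here poses a real obstacle. The only point requiring care is the phrase ``within either half'': it must be read as a permutation preserving the index sets $\{1,\ldots,k\}$ and $\{k+1,\ldots,m\}$ with $k=\lceil m/2\rceil$. Under that reading the argument holds for odd $m$ as well, and it is exactly what guarantees the sign pattern $s_t$ is invariant under the permutation.
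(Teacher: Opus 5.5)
Your proposal is correct and follows essentially the same route as the paper: within-half permutations leave $E_n$ and $F_n$ unchanged, and a cross-boundary exchange sends them to $E_n-v+w$ and $F_n-w+v$, whose sum and difference give $C_n$ and $D_n+2(w-v)$. Your explicit reindexing of the signed sums and your remark that the sign pattern $s_t$ is invariant (which also covers odd $m$) only spell out details the paper leaves implicit.
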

\begin{proof}
Within-half permutations preserve each sum in~\eqref{eq:segments}. A cross-boundary exchange changes the totals to $E_n-v+w$ and $F_n-w+v$. Adding these expressions preserves $C_n$; subtracting them gives $D_n+2(w-v)$.
\end{proof}

The split resolves allocation across one boundary. Within each half, outputs with the same sum remain identified. For $m>2$, the raw representation therefore compresses the trajectory even though it recovers both half totals. At $m=2$, each half has one output and the pair recovers both individual outputs. At $m=1$, the late total is empty and $D=C$, adding no raw information.

Two other cases distinguish the sum from the contrast. Equal half totals give $D=0$, even when the individual outputs within either half vary. Opposite half totals give $C=0$ and possibly nonzero $D$. The first case places all retained information in the cumulative coordinate; the second places it in the detail. These algebraic cases also explain why absence of a detail slot differs from a slot whose current value happens to be zero.

The temporal resolution is fixed by $m$ and the midpoint, rather than learned for each example. This gives the extra source a common interpretation across blocks and introduces no boundary-selection parameters. It also makes granularity a substantive architectural choice: a half split of a 24-event block summarizes different portions of the computation from a half split of a 12-event block. The block-count ablation examines that choice at fixed network depth.

\subsection{From raw coordinates to routed mixtures}

Exact recovery applies to $(C,D)$. The implemented read uses $(C,\widehat D)$ together with other sources and nonnegative softmax weights. It therefore operates on a different object from the unrestricted linear combination in~\eqref{eq:span}. Magnitude matching changes the supplied coordinate, and the scale multiplier is not an additional source given to the router.

Within a fixed RMS reduction group, write $\widehat D=aD$, with $a>0$. At each token and coordinate in that group, if the router assigns weights $\alpha_C$ and $\alpha_D$ to this block's two slots, their contribution is
\begin{equation}
 \alpha_C C+\alpha_D\widehat D
 =(\alpha_C+a\alpha_D)E+(\alpha_C-a\alpha_D)F.
 \label{eq:halfmix}
\end{equation}
With only $C$, the two half totals receive the same coefficient. A positively weighted detail separates those coefficients and can make the late coefficient negative. The available direction is oriented: for this isolated pair, the early coefficient is at least the late coefficient. A softmax over $(C,\widehat D)$ does not provide arbitrary signed weights over $(E,F)$. The transformation changes the geometry seen by the router instead of merely renaming two interchangeable slots.

The other blocks and the embedding enter through the same normalization. Let $\mathcal C$ contain their cumulative slots and the embedding, and let $\mathcal D$ contain the detail slots. Set $Z_C=\sum_{s\in\mathcal C}\exp z_s$ and $Z_D=\sum_{s\in\mathcal D}\exp z_s$. For nonempty groups, let $\mu_C$ and $\mu_D$ be their within-group normalized mixtures. Then
\begin{equation}
 \widetilde h=(1-p_D)\mu_C+p_D\mu_D,\qquad
 p_D=\frac{Z_D}{Z_C+Z_D}.
 \label{eq:groupmixture}
\end{equation}
Details affect a read both through their values and through competition for normalized weight. Even a zero-valued detail can change the coefficient of $\mu_C$ by changing $Z_D$. This separates an added direction from an added routing slot and motivates the duplicate-source control.

Masking all detail slots gives the cumulative-only read exactly. With fixed finite content scores, taking their shared bias to $-\infty$ has the same limit. The initialization at $-2$ is a finite soft preference, so HAARES starts with nonzero detail weight. The relation is a property of a fixed read; during training, the different reads produce different later source values.

\subsection{The duplicate-source control}

Suppose two slots contain the same value $c$ and the same normalized key. Let $z$ be their common content score and $b_1,b_2$ their biases. Their combined softmax numerator is
\begin{equation}
 \bigl(\exp(z+b_1)+\exp(z+b_2)\bigr)c
 =\exp\!\bigl(z+\log(\exp b_1+\exp b_2)\bigr)c.
 \label{eq:duplicate}
\end{equation}
The denominator combines in the same way. At that read, duplicating the source is equivalent to retaining one value with an effective log-sum-exp bias. It adds no raw direction, but it changes the source's allocation relative to other slots. Trainable duplicate biases also change the optimization parameterization.

This control is especially relevant to HAARES because its active raw pair coincides during the early half. Before late outputs arrive, the extra slot can already affect allocation. Later in the block, it carries the early--late contrast as well. Comparing HAARES only with the cumulative-only parent would combine these two changes. The duplicated-$C$ experiment tests the extra-slot alternative, while the random-sign detail tests the particular structure of the added projection. Their measured endpoints are analyzed in Section~\ref{sec:ablations}.

\subsection{Source-count complexity}
\label{sec:complexity}

Before event $r$ of block $n$, the two source counts are
\begin{equation}
 S_{\mathrm{Block}}(n,r)=n+\mathbf1_{r>1},\qquad
 S_{\method}(n,r)=2n-1+2\mathbf1_{r>1}.
\end{equation}
Averaging over the $N$ blocks and $m$ events per block, excluding the final readout, gives
\begin{equation}
 \overline S_{\mathrm{Block}}=\frac{N+3}{2}-\frac1m,\qquad
 \overline S_{\method}=N+2-\frac2m.
 \label{eq:avgsource}
\end{equation}
For $L=48,N=4$, the averages are 3.46 and 5.92, with maxima 5 and 9. Both remain $O(N)$ in source count; HAARES changes the constant rather than recovering the $2L$ individual outputs. Scoring and mixing cost $O(SBTd)$ for $S$ slots. With $d_{\mathrm{ff}}$ proportional to $d$, the extra arithmetic relative to the main projections scales approximately as $O(N/d)$. The estimate counts arithmetic; Section~\ref{sec:efficiency} measures the accompanying memory and runtime costs.

The count follows directly from the online interface. At the first event of block $n$, the embedding and $n-1$ completed cumulative sources give $n$ baseline slots. HAARES instead has the embedding and two slots for each completed block, giving $2n-1$. The remaining $m-1$ events also see one active cumulative source or an active pair. Thus
\begin{equation}
 \frac1{Nm}\sum_{n=1}^{N}\sum_{r=1}^{m}\mathbf1_{r>1}
 =1-\frac1m,
\end{equation}
which yields~\eqref{eq:avgsource}. For $m>1$, the maxima occur in the last block after its first event. Table~\ref{tab:sourcecounts} gives the counts for the block granularities used in the experiments. These are exact source counts from the source-bank definition.

\begin{table}[tbp]
\centering
\caption{Analytical intermediate-read source counts at 48 layers. The final router is excluded from the average and uses $N+1$ sources in both methods. An event is an attention or MLP sublayer output.}
\label{tab:sourcecounts}
\small
\begin{tabular}{@{}rrrrrr@{}}
\toprule
$N$ & Events/block $m$ & \multicolumn{2}{c}{Block AttnRes} & \multicolumn{2}{c}{HAARES}\\
\cmidrule(lr){3-4}\cmidrule(l){5-6}
 & & Average & Maximum & Average & Maximum\\
\midrule
4 & 24 & 3.46 & 5 & 5.92 & 9\\
6 & 16 & 4.44 & 7 & 7.88 & 13\\
8 & 12 & 5.42 & 9 & 9.83 & 17\\
\bottomrule
\end{tabular}
\end{table}

These counts expose a trade-off that the phrase ``one extra source per block'' can obscure. Later sublayers repeatedly read the completed details, so the total work is not just the cost of forming $D$ once. Conversely, retaining two summaries per block is still different from keeping every sublayer output as a distinct source. At fixed $N$, increasing $L$ lengthens the sequence compressed by each pair without increasing its maximum source count. At fixed $L$, increasing $N$ preserves finer cumulative resolution but increases the number of candidates read downstream. Varying the block count changes both the compression within a source and the work required to read the bank.

\FloatBarrier
\section{Experimental setup}
\label{sec:methodology}

We compare HAARES with its cumulative-only parent under the same backbone and training recipe. The depth sweep varies how many outputs each of four blocks compresses. The 48-layer comparisons then examine model seeds, width, and a BPE pipeline. Four additional corpora test the same architectural choice in separately trained language models. Finally, granularity, duplicate-source, random-sign, scale, and bias ablations examine which parts of the change account for the loss differences.

The primary outcome is the best validation cross-entropy reached within a fixed training budget. Seed-level values are available for the principal 48-layer comparisons and ablations; the domain results are reported as seed means. Every task is next-token prediction, including the review and code datasets.

\subsection{Datasets and tokenization}

OpenWebText is the main corpus. The additional datasets are TinyStories, WikiText-103, Yelp Review Full, and CodeSearchNet-Python \citep{gokaslan2019openwebtext,eldan2023tinystories,merity2017wikitext,zhang2015charcnn,husain2019codesearchnet}. Table~\ref{tab:data} gives their configurations and approximate sizes; Appendix~\ref{app:data} records dataset revisions and preprocessing details.

The character tokenizer has 256 entries: four special tokens and the 252 most frequent training-corpus characters. Text is processed line by line, without lowercasing or additional filtering. The pipeline forms length-513 chunks for 512-token next-token prediction. It neither concatenates separate input lines before chunking nor inserts an extra end-of-sequence symbol between lines. Code indentation and line content are retained, with carriage returns and newlines normalized consistently.

The BPE experiment uses a 4096-entry byte-level tokenizer \citep{sennrich2016bpe}, trained through the Hugging Face \texttt{tokenizers} backend on the first 100M raw OpenWebText bytes. Tokenization processes approximately 14.3B raw bytes in 1M-byte batches and stores about 7.8B tokens as 16-bit integers, or roughly 1.83 raw bytes per token. This pipeline uses a random 95/5 train/validation split, rather than the character pipeline's row-based split. We compare architectures within each pipeline and keep character and BPE losses separate because their token units and data splits differ.

\begin{table}[tbp]
\centering
\caption{Dataset sources and approximate training/evaluation sizes. The evaluation column lists the split used for checkpoint selection.}
\label{tab:data}
\small
\begin{tabularx}{\textwidth}{@{}lXrr@{}}
\toprule
Dataset & Source/configuration & Train & Validation \\
\midrule
OpenWebText & Hugging Face \texttt{openwebtext}; 30 of 80 parquet shards; row index modulo 10 defines validation & $\sim$2.7M lines, 13.4 GB & $\sim$301K lines, 1.5 GB \\
WikiText-103 & \texttt{wikitext-103-raw-v1} & $\sim$1.8M lines, 537 MB & 3,760 lines, 1.1 MB \\
TinyStories & \texttt{roneneldan/TinyStories} & $\sim$2.1M lines, 1.9 GB & $\sim$22K lines, 19 MB \\
Yelp Review Full & Hugging Face \texttt{yelp\_review\_full} & 650,000 lines, 475 MB & 50,000 lines, 37 MB \\
CodeSearchNet-Python & Hugging Face \texttt{code\_search\_net}, subset \texttt{python} & 412,178 lines, 342 MB & 23,107 lines, 20 MB \\
\bottomrule
\end{tabularx}
\end{table}

\subsection{Backbone, model families, and baselines}

All models are causal decoder-only PreNorm Transformers. They use affine RMSNorm with $\epsilon=10^{-6}$ \citep{zhang2019rmsnorm}, rotary positional embeddings with base 10,000 and cached maximum length 2048 \citep{su2024roformer}, SwiGLU MLPs \citep{shazeer2020glu}, eight attention heads, and tied input/output embeddings \citep{press2017tied}. QKV and attention-output projections are bias-free, and dropout is zero. Attention is evaluated through PyTorch's \texttt{scaled\_dot\_product\_attention}. The selected backend kernel is not identified by the recorded API call.

Linear and embedding weights are initialized from $\mathcal N(0,0.02^2)$; ordinary biases and router queries start at zero. Table~\ref{tab:modelconfig} lists the model families. Unless explicitly varied, the block partition has $N=4$. In a 48-layer model this gives 24 residual events per block. The extra BPE vocabulary changes the medium model's parameter count from approximately 201M to 204M, while its hidden width, depth, and MLP width remain fixed.

\begin{table}[tbp]
\centering
\caption{Model families. Parameters are approximate. The 4096-entry BPE vocabulary raises the medium 48-layer model from about 201M to 204M parameters.}
\label{tab:modelconfig}
\small
\begin{tabular}{@{}lccc@{}}
\toprule
Configuration & Small & Medium & Large \\
\midrule
Hidden width $d$ & 128 & 512 & 768 \\
MLP width $d_{\mathrm{ff}}$ & 1024 & 2048 & 3072 \\
Layers $L$ & 12/24/48 & 12/24/48 & 48 \\
Attention heads & 8 & 8 & 8 \\
Head dimension & 16 & 64 & 96 \\
Context length & 512 & 512 & 512 \\
Vocabulary & 256 & 256 or 4096 & 256 \\
Approx. parameters at 48L & 22M & 201M or 204M & 453M \\
Default blocks $N$ & 4 & 4 & 4 \\
\bottomrule
\end{tabular}
\end{table}

Block AttnRes is the direct parent comparison: it removes detail slots while retaining the same cumulative sources and routing implementation. Standard Residual uses ordinary additive PreNorm connections. The small-family domain experiments also include ReZero, with a zero-initialized scalar per residual sublayer, and LayerScale, with per-channel initial scales of $0.1$, $10^{-5}$, and $10^{-6}$ at 12, 24, and 48 layers. These baselines establish performance under the common recipe; they are not independently retuned for each dataset.

The full HAARES configuration uses the fixed half split, RMS matching with $\epsilon=10^{-6}$ and $\gamma=4$, and learned detail biases initialized at $-2$. Its final readout remains cumulative-only. The ablations change the named source, scale, bias, or partition component and retain the common model and training configuration.

\subsection{Optimization and model selection}

Training uses fp32 and AdamW \citep{loshchilov2019decoupled} with learning rate $3\times10^{-4}$, $\beta_1=0.9$, $\beta_2=0.95$, optimizer $\epsilon=10^{-8}$, and weight decay 0.01. Biases, normalization parameters, gates, residual scales, router queries, and parameters with fewer than two dimensions are excluded from weight decay. The learning rate is constant, without warmup or decay. Global gradient norm is clipped at 1.0. Batch size is 16 per GPU, with context length 512 and 8192 target tokens per step. Training uses no gradient accumulation, activation checkpointing, early stopping, or \texttt{torch.compile}.

Small-family 12- and 24-layer models train for 50K steps. Small-family 48-layer models and all medium/large models train for 30K steps. Validation occurs every 2K steps, with one complete pass over the evaluation split. For method $M$, model seed $j$, and scheduled evaluations $\mathcal T$, the reported endpoint and paired gain are
\begin{equation}
 \ell_{M,j}^{\star}=\min_{t\in\mathcal T}\ell_{M,j}(t),\qquad
 g_j=\ell_{\mathrm{Block},j}^{\star}-\ell_{\method,j}^{\star}.
 \label{eq:endpoint}
\end{equation}
Positive $g_j$ favors HAARES. Each method can reach its minimum at a different checkpoint. This endpoint answers a budgeted model-selection question: which validation quality was attained during the run? It is distinct from a final-step comparison and from evaluation on an untouched test split.

Model seeds are 42, 123, and 2026; the 453M comparison uses 42 and 2026. The data-order seed is fixed at 42. Pairing methods by model seed and data order keeps these choices common while allowing the architectures to follow their own training trajectories. We report individual endpoints, arithmetic means, and sample standard deviations where the individual values are available. The paired gains remain visible alongside method-wise dispersion, so a mean difference cannot hide an unfavorable seed. These summaries describe the observed model initializations; no significance tests or cross-setting pooled confidence intervals are used.

Perplexity is the exponential of cross-entropy within a given tokenization. A paired loss reduction $g_j$ corresponds to a perplexity ratio $\exp(-g_j)$. Tables preserve the reported rounded perplexities, apart from three explicitly identified arithmetic corrections in Appendix~\ref{app:pairsppl}. Aggregate perplexities are retained as reported aggregates rather than recomputed by exponentiating a rounded mean loss. This matters because averaging perplexities and exponentiating an average loss are different operations.

Architecture and checkpoint choices use validation outcomes. Domain means are therefore interpreted within each dataset and depth; the BPE results compare two methods under the same BPE pipeline. Details not fully specified in the experiment record---including the BPE split unit and seed, tokenizer-training overlap with validation, and RMS reduction axes---are collected in Appendix~\ref{app:repro}. They are kept separate from the parameters that the record does specify.

\subsection{Hardware and cost protocol}
\label{sec:costprotocol}

Small-family accuracy experiments use 24GB NVIDIA GeForce RTX 4090 GPUs. Medium/large experiments and resource measurements use 96GB NVIDIA RTX PRO 6000 Blackwell GPUs. The recorded environment is Ubuntu 22.04, Python 3.12, PyTorch 2.8.0, CUDA 12.8, and cuDNN 9.x.

Resource measurements use one GPU, fp32, batch size 16, and context length 512. Throughput is the number of processed tokens divided by elapsed time over 2K steps, beginning at step zero and including startup and initial data-pipeline overhead. Peak memory is allocated memory from \texttt{torch.cuda.max\_memory\_allocated()}, rather than reserved memory. We derive step times from the same rounded throughputs used in the budget comparison.

We report analytical arithmetic, peak allocated memory, and measured throughput separately. Additional source operations scale differently from the backbone projections and can incur memory-access costs disproportionate to their arithmetic count. This distinction also motivates IO-aware attention implementations \citep{dao2022flashattention}. The measurements here characterize the unfused source-handling path actually used, with no extrapolation to a different precision or distributed implementation.

\section{Results}

\subsection{Depth dependence on OpenWebText}
\label{sec:depth}

The clearest gain occurs at 48 layers. With four blocks and width $d=512$, HAARES lowers mean validation loss from 1.8057 to 1.7744, a reduction of 0.0313 (Table~\ref{tab:depthsweep}). At 12 layers the reduction is 0.0078; at 24 layers HAARES is slightly worse, by 0.0016. The reduction is largest at 48 layers, but it does not grow monotonically with depth.

\begin{table}[tbp]
\centering
\caption{OpenWebText depth sweep, $d=512$, $d_{\mathrm{ff}}=2048$, $N=4$. Entries are mean best validation loss / reported perplexity over three seeds. $\Delta=\ell_{\method}-\ell_{\mathrm{Block}}$; negative favors HAARES.}
\label{tab:depthsweep}
\small
\begin{tabular}{@{}lccc@{}}
\toprule
Depth & Block AttnRes & \method{} & $\Delta$ loss \\
\midrule
12L & $1.8527/6.38$ & $\mathbf{1.8449}/\mathbf{6.33}$ & $-0.0078$ \\
24L & $\mathbf{1.8112}/\mathbf{6.12}$ & $1.8128/6.13$ & $+0.0016$ \\
48L & $1.8057/6.09$ & $\mathbf{1.7744}/\mathbf{5.90}$ & $-0.0313$ \\
\bottomrule
\end{tabular}
\end{table}

A fixed four-block partition contains 6, 12, and 24 residual events per block at these depths. The 48-layer model asks each cumulative source to summarize four times as many outputs as the 12-layer model. HAARES retains the same two coordinates per block in all three cases, so its additional source preserves an early--late distinction over progressively longer segments. The strongest gain at the longest segment is consistent with that motivation.

The intermediate 24-layer result is important for interpreting this trend. Its cumulative-only baseline already improves substantially over the 12-layer baseline, but adding the detail changes little and reverses the sign. The 24-layer result rules out a simple monotonic relationship between depth and gain. The fixed-depth block-count experiment in Section~\ref{sec:ablation} tests the related granularity question without changing the network's number of layers or parameters.

\subsection{Paired seeds, width, and tokenization}
\label{sec:paired}

HAARES improves every paired seed in the 201M, 48-layer comparison. The Block-minus-HAARES reductions are 0.0168, 0.0322, and 0.0449 for seeds 42, 123, and 2026, respectively. Mean loss changes from $1.8057\pm0.0136$ to $1.7744\pm0.0095$, with sample standard deviations calculated over the three run minima (Table~\ref{tab:201m}). Both the mean and the individual pairs favor the added source.

\begin{table}[tbp]
\centering
\caption{OpenWebText, 201M parameters, 48 layers. Seed cells report best validation loss / perplexity. The final column is the mean and sample standard deviation of loss. Bold identifies the lowest seed or mean loss.}
\label{tab:201m}
\small
\begin{tabular}{@{}lcccc@{}}
\toprule
Method & Seed 42 & Seed 123 & Seed 2026 & Mean loss $\pm$ SD \\
\midrule
Standard Residual & $1.8122/6.12$ & $1.8570/6.40$ & $1.8287/6.23$ & $1.8326\pm0.0227$ \\
Block AttnRes & $1.7995/6.05$ & $1.7963/6.03$ & $1.8213/6.18$ & $1.8057\pm0.0136$ \\
\method{} & $\mathbf{1.7827}/\mathbf{5.95}$ & $\mathbf{1.7641}/\mathbf{5.84}$ & $\mathbf{1.7764}/\mathbf{5.91}$ & $\mathbf{1.7744}\pm0.0095$ \\
\bottomrule
\end{tabular}
\end{table}

The Standard Residual rows separate the inherited benefit of block routing from the benefit of the half split. Standard Residual has mean loss 1.8326. The reduction to 1.8057 belongs to the cumulative-only parent; the further reduction to 1.7744 comes from the HAARES comparison. This distinction is useful because a single comparison against ordinary residual connections would combine the two architectural changes. HAARES is also below Standard Residual in each individual seed.

The 453M model keeps 48 layers and increases hidden width from 512 to 768. HAARES again improves both available paired seeds, by 0.0383 and 0.0082, giving a mean reduction of 0.0233 after rounding (Table~\ref{tab:453m}). The improvement persists at the larger width, with seed 42 contributing most of the reduction. HAARES has greater endpoint dispersion than the parent in these two runs. The wider model benefits from the source addition, but its mean gain is smaller than in the 201M comparison.

\begin{table}[tbp]
\centering
\caption{OpenWebText, 453M parameters, 48 layers. Entries are best validation loss / perplexity. The comparison uses two model seeds; the last column gives their mean and sample standard deviation.}
\label{tab:453m}
\small
\begin{tabular}{@{}lccc@{}}
\toprule
Method & Seed 42 & Seed 2026 & Mean loss $\pm$ SD \\
\midrule
Standard Residual & $1.8280/6.22$ & $1.8315/6.24$ & $1.8298\pm0.0025$ \\
Block AttnRes & $1.7971/6.03$ & $1.7945/6.02$ & $1.7958\pm0.0018$ \\
\method{} & $\mathbf{1.7588}/\mathbf{5.81}$ & $\mathbf{1.7863}/\mathbf{5.97}$ & $\mathbf{1.7726}\pm0.0194$ \\
\bottomrule
\end{tabular}
\end{table}

The BPE model tests the same source change under a 4096-entry tokenization pipeline (Table~\ref{tab:bpe}). The three paired reductions are 0.0086, 0.0315, and 0.0049, with mean 0.0150. All three are positive, including the two smaller improvements. The effect is thus present in both the character and BPE experiments. The BPE pipeline also changes the split procedure, so the smaller mean gain cannot be attributed to tokenization alone.

\begin{table}[tbp]
\centering
\caption{OpenWebText with 4096-entry BPE, 204M parameters, 48 layers. Seed cells report best validation loss / perplexity; the last column reports mean loss and sample standard deviation.}
\label{tab:bpe}
\small
\begin{tabular}{@{}lcccc@{}}
\toprule
Method & Seed 42 & Seed 123 & Seed 2026 & Mean loss $\pm$ SD \\
\midrule
Standard Residual & $1.8910/6.63$ & $1.8896/6.62$ & $1.8878/6.60$ & $1.8895\pm0.0016$ \\
Block AttnRes & $1.8692/6.48$ & $1.8670/6.47$ & $1.8651/6.46$ & $1.8671\pm0.0021$ \\
\method{} & $\mathbf{1.8606}/\mathbf{6.43}$ & $\mathbf{1.8355}/\mathbf{6.27}$ & $\mathbf{1.8602}/\mathbf{6.43}$ & $\mathbf{1.8521}\pm0.0144$ \\
\bottomrule
\end{tabular}
\end{table}

Figure~\ref{fig:pairedevidence} displays the individual gains together. The 201M character comparison spans 0.0168--0.0449; the 453M comparison spans 0.0082--0.0383; and BPE spans 0.0049--0.0315. All eight pairs improve, but their magnitudes vary substantially. The individual points show how much the gain varies across the available seeds. Each horizontal range refers to its own model and tokenization; Appendix~\ref{app:pairs} gives the exact plotted values.

\begin{figure}[tbp]
\centering
\includegraphics[width=0.95\textwidth]{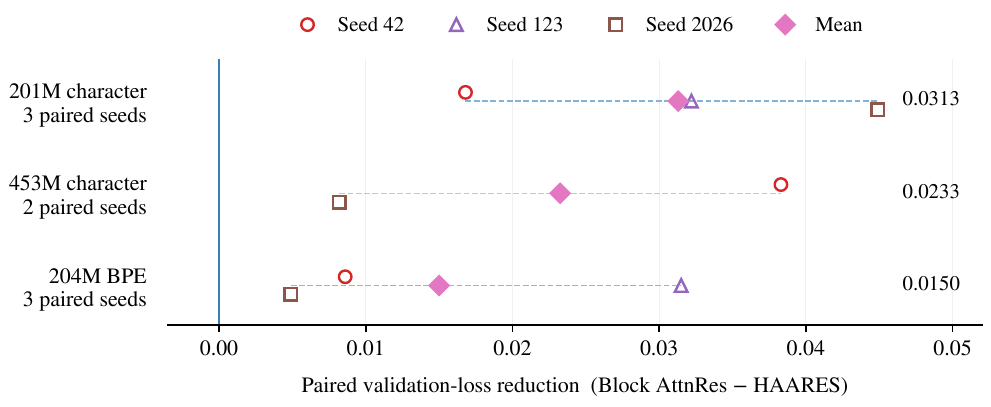}
\caption{Paired 48-layer reductions in best validation loss, $\ell_{\mathrm{Block}}-\ell_{\method}$. Each symbol denotes a model seed; diamonds mark setting means. Horizontal segments span the observed minimum and maximum and are not confidence intervals. The zero line marks equal loss. Character and BPE losses have different token units, so magnitudes should be interpreted within each setting}
\label{fig:pairedevidence}
\end{figure}

These comparisons give two complementary observations. The gain recurs across the recorded initializations, so it is not carried by a single successful run. Its variability increases in some settings: the 453M and BPE HAARES endpoints are more dispersed than their parent endpoints, unlike the 201M character case. We therefore use the paired direction to establish repeatability within the reported runs and the individual magnitudes to describe how strongly each run benefits.

\subsection{Separately trained domain benchmarks}
\label{sec:crossdomain}

At 48 layers, HAARES achieves the lowest mean validation loss among all five methods on each additional corpus (Table~\ref{tab:crossdomain}). The reductions over Block AttnRes are 0.0188 on TinyStories, 0.0135 on WikiText-103, 0.0133 on Yelp Review Full, and 0.0224 on CodeSearchNet-Python. These models use the small family, with hidden width 128, so the 48-layer result extends across both corpus and model width.

\begin{table}[tbp]
\centering
\caption{Cross-domain mean best validation loss, small family ($d=128$, $d_{\mathrm{ff}}=1024$). Bold marks the smallest reported mean at each dataset/depth, including ties. Seed-level variability is not available for these means. Each model is trained separately on its dataset.}
\label{tab:crossdomain}
\small
\begin{tabular}{@{}llccc@{}}
\toprule
Dataset & Method & 12L & 24L & 48L \\
\midrule
\multirow{5}{*}{TinyStories}
& Standard Residual & 0.5555 & 0.5391 & 0.5219 \\
& ReZero & 0.5815 & 0.5571 & 0.5520 \\
& LayerScale & 0.5663 & 0.5614 & 0.5446 \\
& Block AttnRes & \textbf{0.5549} & \textbf{0.5310} & 0.5355 \\
& \method{} & 0.5569 & 0.5313 & \textbf{0.5167} \\
\midrule
\multirow{5}{*}{WikiText-103}
& Standard Residual & 1.0337 & 1.0014 & 0.9783 \\
& ReZero & 1.0952 & 1.0533 & 1.0312 \\
& LayerScale & 1.0421 & 1.0624 & 1.0300 \\
& Block AttnRes & 1.0325 & 0.9990 & 0.9742 \\
& \method{} & \textbf{1.0258} & \textbf{0.9754} & \textbf{0.9607} \\
\midrule
\multirow{5}{*}{Yelp Review Full}
& Standard Residual & \textbf{0.9945} & \textbf{0.9713} & 0.9605 \\
& ReZero & 1.0095 & 0.9879 & 0.9802 \\
& LayerScale & 0.9977 & 0.9874 & 0.9710 \\
& Block AttnRes & 0.9990 & 0.9988 & 0.9703 \\
& \method{} & 1.0053 & \textbf{0.9713} & \textbf{0.9570} \\
\midrule
\multirow{5}{*}{CodeSearchNet-Python}
& Standard Residual & 0.9306 & 0.9019 & 0.8861 \\
& ReZero & 0.9455 & 0.9094 & 0.8925 \\
& LayerScale & 0.9277 & 0.9139 & 0.8863 \\
& Block AttnRes & 0.9242 & 0.8918 & 0.8712 \\
& \method{} & \textbf{0.9095} & \textbf{0.8657} & \textbf{0.8488} \\
\bottomrule
\end{tabular}
\end{table}

The shallower comparisons differ by dataset. WikiText-103 and CodeSearchNet-Python favor HAARES at all three depths. TinyStories favors Block AttnRes at 12 and 24 layers, by 0.0020 and 0.0003, before favoring HAARES at 48 layers. On Yelp, the 12-layer HAARES model is worse than the parent, while the 24-layer HAARES model matches Standard Residual to the reported precision and improves over Block AttnRes. HAARES has lower mean loss than Block AttnRes on two of four corpora at 12 layers, three at 24 layers, and all four at 48 layers.

The Standard Residual comparison also varies across these tasks. At 48 layers, Block AttnRes is worse than Standard Residual on TinyStories and Yelp, whereas HAARES improves over both. On WikiText-103 and CodeSearchNet-Python, the parent already improves on Standard Residual and the half split provides a further reduction. On TinyStories and Yelp, the detail improves a parent that does not itself outperform Standard Residual; on the other two corpora, it extends the parent's gain.

Each row family is a separately trained next-token model, not a transfer evaluation of one shared checkpoint. The domain table records seed means without individual dispersion, and the small-family budgets are 50K steps at 12/24 layers and 30K at 48 layers. Comparisons within a corpus and depth are matched; the cross-depth pattern should be read with those budgets in view. Reported perplexities are retained in Appendix~\ref{app:crossppl}.

Across the main and additional corpora, the four-block 48-layer configuration is the most consistent setting for HAARES. The next section examines whether that consistency follows simply from exposing more sources, or whether the half-split representation and its routing treatment matter.

\FloatBarrier
\section{Ablations of granularity and source design}
\label{sec:ablations}

\subsection{Changing block granularity}
\label{sec:ablation}

At fixed depth and width, increasing the number of blocks changes both the length compressed by each summary and the number of candidates read downstream. For the 48-layer, 201M model, $N=4,6,8$ gives 24, 16, and 12 residual events per block. The mean HAARES-minus-Block loss is $-0.0313$ with four blocks, but $+0.0069$ with six and $+0.0113$ with eight (Table~\ref{tab:blockablation}). Of these three partitions, only the four-block setting has a lower HAARES mean than its matched parent.

\begin{table}[tbp]
\centering
\caption{Block-count ablation on the 201M, 48-layer OpenWebText model. Entries are best validation losses. $\Delta$ is the mean HAARES loss minus the Block AttnRes mean at the same $N$.}
\label{tab:blockablation}
\small
\begin{tabular}{@{}lcccccc@{}}
\toprule
Method & $N$ & Seed 42 & Seed 123 & Seed 2026 & Mean & $\Delta$ \\
\midrule
Block AttnRes & 4 & $1.7995$ & $1.7963$ & $1.8213$ & $1.8057$ & -- \\
\method{} & 4 & $1.7827$ & $1.7641$ & $1.7764$ & $\mathbf{1.7744}$ & $\mathbf{-0.0313}$ \\
Block AttnRes & 6 & $1.7957$ & $1.7770$ & $1.7880$ & $\mathbf{1.7869}$ & -- \\
\method{} & 6 & $1.7941$ & $1.7723$ & $1.8151$ & $1.7938$ & $+0.0069$ \\
Block AttnRes & 8 & $1.7857$ & $1.7957$ & $1.7961$ & $\mathbf{1.7925}$ & -- \\
\method{} & 8 & $1.8222$ & $1.7933$ & $1.7959$ & $1.8038$ & $+0.0113$ \\
\bottomrule
\end{tabular}
\end{table}

The individual seeds reveal how the mean reversal arises. At $N=6$, HAARES improves by 0.0016 and 0.0047 in seeds 42 and 123, but worsens by 0.0271 in seed 2026. At $N=8$, it worsens by 0.0365 in seed 42, improves by 0.0024 in seed 123, and is nearly tied in seed 2026, with a 0.0002 reduction. At each finer partition, the loss increase in one seed outweighs the smaller reductions in the other two. Four blocks is the only tested partition with improvements in all three seeds.

The cumulative-only model benefits from a finer partition up to six blocks: its mean falls from 1.8057 at $N=4$ to 1.7869 at $N=6$, then rises to 1.7925 at $N=8$. HAARES with four blocks still reaches the lowest mean, 1.7744, improving on the best cumulative-only setting by 0.0125. Increasing cumulative resolution alone does not reproduce that endpoint. The comparison is based on the same validation selection criterion used throughout the study.

A useful structural comparison is HAARES at $N=4$ against Block AttnRes at $N=8$. Both have at most nine intermediate source slots, but their source coordinates differ. For a fixed realized 24-event coarse block, the two 12-event cumulative summaries are its half totals. They are related to the HAARES coordinates by
\begin{equation}
 (E_n,F_n)\quad\longleftrightarrow\quad(C_n,D_n)=(E_n+F_n,E_n-F_n).
 \label{eq:granularitybasis}
\end{equation}
At completed coarse-block boundaries, these raw representations retain the same pair of half totals in different coordinates. Their routers nevertheless see different key directions and different candidate values. Equation~\eqref{eq:halfmix} shows the coefficient geometry induced by a positively weighted detail; routing directly over $E$ and $F$ has a different nonnegative mixture geometry.

The online interfaces differ as well. HAARES retains the same coarse block identity as it moves from early accumulation to late subtraction. The finer cumulative partition completes an early block and opens a new active block at the midpoint. Their average intermediate source counts are 5.92 and 5.42, respectively, and their final banks contain five and nine sources. Equal peak slot count therefore does not make the two algorithms equivalent. The granularity experiment compares practical residual designs whose coordinates, availability, and readout differ together.

The depth sweep and this fixed-depth ablation agree on the most favorable configuration: a half-split detail added to a relatively long block. They also show why simply adding more source slots is not a reliable rule. The retained distinction and the routing problem created by those slots have to be considered together.

\subsection{Source structure, RMS matching, and detail bias}

The source controls distinguish an extra weighted slot from an early--late contrast (Table~\ref{tab:basisablation}). Duplicating the cumulative source lowers mean loss from 1.8057 to 1.7848. Full HAARES lowers it further to 1.7744 and beats the duplicate in each seed, by 0.0124, 0.0117, and 0.0070. The added slot explains a substantial improvement over the parent, but the structured detail gives a further consistent benefit.

\begin{table}[!htbp]
\centering
\caption{Source and stabilization ablations, 201M parameters, 48 layers, $N=4$. All entries are best validation loss. The random-sign variant uses fixed signs. All other named components follow the shared configuration.}
\label{tab:basisablation}
\small
\begin{tabular}{@{}lcccc@{}}
\toprule
Variant & Seed 42 & Seed 123 & Seed 2026 & Mean \\
\midrule
Block AttnRes & $1.7995$ & $1.7963$ & $1.8213$ & $1.8057$ \\
Duplicated $C\times2$ & $1.7951$ & $1.7758$ & $1.7834$ & $1.7848$ \\
Random-sign detail & $1.8100$ & $1.8000$ & $1.8080$ & $1.8060$ \\
Fixed detail bias $0$ & $1.8040$ & $1.7941$ & $1.8090$ & $1.8024$ \\
Fixed detail bias $-2$ & $1.8172$ & $1.7816$ & $1.7895$ & $1.7961$ \\
Fixed detail bias $-4$ & $1.8226$ & $1.8172$ & $1.8291$ & $1.8230$ \\
\method{} without RMS match & $1.7862$ & $1.7747$ & $1.7782$ & $1.7797$ \\
\method{} & $\mathbf{1.7827}$ & $\mathbf{1.7641}$ & $\mathbf{1.7764}$ & $\mathbf{1.7744}$ \\
\bottomrule
\end{tabular}
\end{table}

The duplicate result agrees with~\eqref{eq:duplicate}: two equal values can change softmax allocation even without adding a raw direction. Duplication is an active reweighting control, not a change that leaves the parent read unchanged. The 0.0104 mean advantage of HAARES over duplication measures the improvement over that reweighting alternative. Since the variants are trained separately, these differences compare complete training conditions rather than additive contributions that must sum to a unique causal decomposition.

The fixed random-sign detail has mean loss 1.8060, essentially the cumulative-only result. Replacing the midpoint sign pattern with that control thus removes the improvement. The experiment supports the chosen ordered split over the tested random projection; the record does not specify the random sign realization or include a sweep over sign patterns.

Removing RMS matching yields mean loss 1.7797, retaining a 0.0260 reduction from Block AttnRes. Matching provides a smaller additional reduction of 0.0053. Its paired improvements are 0.0035, 0.0106, and 0.0018; the reduction is present in every seed. The structured source remains useful without scale matching, while bounded matching improves its integration into the router. This interpretation keeps the source itself separate from the treatment of its magnitude.

Learning the detail bias also matters. Fixing it at $0$, $-2$, or $-4$ gives mean losses 1.8024, 1.7961, and 1.8230, all above the learned-bias result. The fixed $-2$ variant shares the full method's initial preference but cannot adapt that preference during training. The more negative fixed $-4$ variant performs worse than the cumulative-only parent. None of the tested fixed biases matches the learned bias, and a finite negative bias does not remove the detail slots.

The latter comparison is consistent with the router definition. A finite bias leaves detail mass in the denominator and permits nonzero detail values to affect subsequent computations. Only masking, or a limiting bias of $-\infty$ at a fixed read, recovers the cumulative-only mixture. Across the ablations, the best endpoint comes from combining a structured contrast with bounded value scaling and a learned allocation bias. No single control reproduces the full result.

\FloatBarrier
\section{Computational cost and quality targets}

\subsection{Measured resource cost}
\label{sec:efficiency}

HAARES adds less than 0.01\% to the reported trainable parameter count. Its practical cost is dominated by additional activation sources rather than model weights. Table~\ref{tab:cost} gives peak allocated memory, throughput, and the reported analytical FLOP estimates for three 48-layer widths. The FLOP increments are approximately 0.5\%, 0.3\%, and 0.2\%, while the measured memory and time penalties are larger.

\begin{table}[!htbp]
\centering
\caption{Matched 48-layer cost on one RTX PRO 6000 Blackwell, fp32, batch size 16, context length 512. Each pair is Block AttnRes $\to$ HAARES. Memory and throughput are measured values; step times and relative increases are calculated from those rounded measurements. FLOP increments are analytical estimates.}
\label{tab:cost}
\small
\setlength{\tabcolsep}{5pt}
\begin{tabular}{@{}rccccc@{}}
\toprule
$d$ & Peak memory & Throughput & Step time & Memory / time inc. & FLOP inc. \\
 & (GB) & (K tokens/s) & (s) & (\%) & (\%) \\
\midrule
128 & $12\to16$ & $27\to22$ & $0.303\to0.372$ & $33.3/22.7$ & 0.5 \\
512 & $38\to45$ & $17\to14$ & $0.482\to0.585$ & $18.4/21.4$ & 0.3 \\
768 & $61\to70$ & $8.8\to7.5$ & $0.931\to1.092$ & $14.8/17.3$ & 0.2 \\
\bottomrule
\end{tabular}
\end{table}

At width 512, peak allocated memory rises from 38 to 45 GB and throughput falls from 17K to 14K tokens/s. With 8192 tokens per step, these throughputs imply step times of approximately 0.482 and 0.585 seconds: an 18.4\% memory increase and 21.4\% time increase. The same calculation gives time increases of 22.7\% at width 128 and 17.3\% at width 768. All runtime comparisons below use step times derived from these same reported throughputs.

The difference between arithmetic and runtime follows the operations being added. Updating $D$ requires elementwise addition or subtraction. Reusing it downstream adds normalization, scores, and weighted source sums. At fixed block count these operations are linear in hidden width, whereas the main backbone projections grow quadratically when MLP width scales with hidden width. Their fraction of analytical FLOPs therefore falls as the model widens.

The runtime implementation must still materialize and repeatedly read the additional sources. Later sublayers access all completed details, not just the current accumulator. Their storage and traffic remain relevant even when they contribute few multiply-add operations. The measured relative penalties decrease with width, but remain substantial at width 768: peak memory increases by 14.8\% and step time by 17.3\%. These are end-to-end measurements; they do not resolve the cost of individual kernels.

For a fixed training-token budget, these overheads are paid directly. HAARES is consequently a quality--cost trade rather than a free representation change. The practical question is whether the quality improvement can reduce the number of steps required for a chosen target enough to offset the slower steps. The recorded checkpoint crossings allow one budget-relative comparison of that question.

\subsection{Reaching the baseline's budget-best loss}

For each 201M seed, we take the best Block AttnRes validation loss within 30K steps as the target. HAARES first meets those targets on the 2K-step evaluation grid at 24K, 20K, and 14K steps (Table~\ref{tab:timetotarget}). The corresponding losses are 1.7961, 1.7835, and 1.8131. Thus every HAARES run reaches its same-seed baseline target before the full training budget is exhausted.

\begin{table}[!htbp]
\centering
\caption{HAARES time to reach the same-seed baseline budget-best loss. The Block time is the full 30K-step budget, not the baseline's measured first-attainment time. HAARES steps are first observed crossings on the 2K evaluation grid. Times are estimates derived from rounded throughput.}
\label{tab:timetotarget}
\small
\begin{tabular}{@{}lccccc@{}}
\toprule
Seed & Target loss & First \method{} step & \method{} loss & Time (Block$\to$\method{}) & Time saved \\
\midrule
42 & 1.7995 & 24K & 1.7961 & 4.02$\to$3.90 h & 2.9\% \\
123 & 1.7963 & 20K & 1.7835 & 4.02$\to$3.25 h & 19.0\% \\
2026 & 1.8213 & 14K & 1.8131 & 4.02$\to$2.28 h & 43.3\% \\
Mean & 1.8057 & 19.3K & 1.7976 & 4.02$\to$3.14 h & 21.7\% \\
\bottomrule
\end{tabular}
\end{table}

Converting these step counts using 14K tokens/s gives estimated HAARES times of 3.90, 3.25, and 2.28 hours, with mean 3.14 hours. A full 30K-step Block AttnRes run at 17K tokens/s takes an estimated 4.02 hours. Relative to that complete baseline budget, the mean saving is 21.7\%. The denominator is the full baseline budget, not its earliest target-reaching checkpoint: the available records do not include the latter. The targets are selected retrospectively from validation, and the times assume the reported average throughput applies over the compared intervals.

The step-time penalty gives a simple break-even condition. If $K_H$ and $K_B$ are compared step counts and $v_H,v_B$ their token throughputs, both runs process 8192 tokens per step, so
\begin{equation}
 \frac{8192K_H}{v_H}<\frac{8192K_B}{v_B}
 \quad\Longleftrightarrow\quad
 \frac{K_H}{K_B}<\frac{v_H}{v_B}.
 \label{eq:breakeven}
\end{equation}
Here $v_H/v_B=14/17\approx0.824$. Against a 30K-step baseline budget, HAARES must meet the target before about 24.7K steps to finish sooner. All three recorded crossings meet this threshold. Seed 42 does so narrowly at 24K, which explains its 2.9\% time saving; the 20K and 14K crossings leave wider margins.

This calculation connects the source overhead to a concrete training decision. A 21.4\% increase in time per step can be offset by a sufficiently large reduction in steps, but the balance is seed-dependent in the recorded runs. For applications that require an entire fixed-token schedule, the measured slowdown remains. For a validation-quality stopping target similar to those in the table, the existing crossings show how earlier attainment can compensate for that slowdown. Peak memory remains an independent constraint in either case.

The cost results concern one GPU, fp32, and an unfused source-handling implementation, including startup in the throughput window. Fusion or lower precision may alter the trade-off, but those configurations are outside the measurements reported here. The quality comparisons should be read together with these measured memory and throughput costs.

\section{Discussion}

\subsection{What the added source changes}

HAARES makes a block's early--late allocation available to later sublayers without retaining every output as an individual source. The raw inverse identifies the retained information exactly. The routing analysis then identifies its operational role: a weighted detail changes the relative coefficients of the early and late totals, while its slot also competes with cumulative sources for softmax mass. Both effects belong to the implemented method.

The controls support this distinction. Duplicate cumulative sources improve the parent, showing that a new slot can be useful without a new raw direction. The half-split detail improves further in all three ablation seeds. The tested random-sign detail does not reproduce the gain, and eliminating RMS matching retains most of it while worsening each seed relative to the full model. Neither the extra slot nor scale correction alone reproduces the loss reached by the structured source with learned allocation.

The alternative of doubling the cumulative block count clarifies the importance of coordinates. For fixed realized outputs, two half totals and their raw sum--difference pair contain the same information. Yet a router with normalized keys, nonnegative weights, and a type bias is sensitive to which coordinates it receives. HAARES also changes the active-prefix timing and keeps details out of the final readout. The four-block result depends on this representation and read interface; the number of stored vectors alone does not specify the computation.

Ablations trained from initialization compare complete models. They do not localize a learned function to a particular token, layer, or semantic operation. The architectural explanation here remains at the level tested: a structured early--late source improves the matched read interface in the strongest reported regime. Assigning that source a more specific learned role would require corresponding functional measurements.

\subsection{Choosing granularity under the reported training recipe}

The four-block, 48-layer configuration gives the most consistent gain across the main corpus, additional corpora, and paired model initializations. It combines long residual blocks with a small source bank. At six and eight blocks, the loss increase in one seed outweighs the reductions in the other seeds. The four-block half-split configuration is the best choice in this grid; finer partitions do not benefit from the same source addition.

The shared training recipe makes the parent and source-augmented comparisons directly interpretable: their token budgets, schedules, data order, and validation selection rule agree. It also fixes the scope of the result. The recipe uses fp32, a constant learning rate, and one data-order seed; independent optimizer tuning may change the relative behavior of residual parameterizations. The largest model has 453M parameters, with two seeds, and all models use context length 512. Evidence at these settings should be extended by measurements before being applied to substantially larger models or longer contexts.

Validation selection is part of the experimental question. Each endpoint is the best scheduled validation loss within the budget, and the architectural choices are compared on the same validation criterion. These endpoints measure attained validation quality, not performance on an independent test split. The domain table adds breadth but reports only seed means. The character and BPE experiments use different token units and split procedures, so the within-pipeline improvements can be compared in direction, but their loss magnitudes use different token units.

The experimental record leaves several preprocessing details unspecified. In particular, the BPE tokenizer is trained on an initial raw-byte segment before the random split, and disjointness from validation is not documented. The split unit and seed are also unspecified. On Yelp, the 50K-example evaluation split is used for checkpoint selection. The data and reproducibility appendices distinguish the recorded preprocessing steps from these unresolved details.

\subsection{Quality, memory, and implementation choices}

The source-bank change is small in parameters but appreciable in activations. At medium width, the observed 18.4\% peak-memory increase accompanies a 21.4\% increase in step time. Earlier attainment of the baseline's budget-best losses offsets the time penalty in the three recorded 201M comparisons, relative to running the baseline for its full budget. It does not remove the memory requirement. Whether the earlier target attainment is useful depends on both the training objective and the device's memory capacity.

The current implementation explicitly handles and reuses the additional sources. An optimized implementation could target source normalization, materialization, and weighted mixing, but the measured totals are those of the present unfused fp32 path. We do not substitute an arithmetic estimate for that runtime cost. Similarly, a finer or learned split would change both the information retained and the work required to expose it. The fixed midpoint offers a simple, reproducible source definition whose benefits and costs can be measured without a separate boundary-selection mechanism.

Public text and code corpora also require attention to their source licenses and inherited biases. This study evaluates language-model training loss, not privacy or harmful-generation behavior. The tables give the numerical comparisons and their seed-level variation where available. Reproducing the training runs also requires the original implementation, checkpoints, and raw logs. Appendix~\ref{app:repro} records the remaining implementation details separately from the mathematical definition.

\Needspace{23\baselineskip}
\section{Conclusion}

HAARES augments a block residual router with a single early--late detail alongside each cumulative source. The raw sum--difference pair recovers both half-block totals, and bounded magnitude matching supplies the detail to intermediate attention and MLP reads without changing the cumulative-only final readout. The method retains a coarse part of the block trajectory through a small change to the parent's source interface.

The 48-layer, four-block experiments show consistent paired improvements across the reported model widths and tokenization pipelines, with the same direction on four additional training corpora. Duplicate-source, random-sign, scale, and bias controls favor the full construction, while the shallow and finer-partition results identify configurations where its advantage is weaker or absent. Retaining a useful coordinate and giving it an appropriate routing treatment matter together.

The measured implementation pays for that quality with more activation memory and slower steps. Earlier attainment of the baseline's budget-best validation losses compensates for the time penalty relative to full-budget baseline runs in the recorded 201M comparisons. For deep models with long residual blocks, HAARES makes an early--late contrast available to intermediate sublayers while preserving the cumulative update. The resulting quality improvement must be weighed against the measured memory and runtime costs.

\section*{Acknowledgements}
The authors thank Professor Guoli Zhou of the School of Mathematics and Statistics, Chongqing University, for comments and suggestions during the development of this work.

\clearpage
\section*{Statements and Declarations}

\paragraph{Funding.}
This work was supported by the project ``Development of an Artificial Intelligence Large Model for Urban Transportation Vehicle Exterior Styling and Industrial Design'' (Project No. 2024CCA572), the Guangzhou Science and Technology Program (No. 2024A03J0416), and the Guangzhou Key Medical Discipline Construction Project (No. 2025-2027-11).

\paragraph{Author contributions.}
Kehan Wang, Shifeng Wu, and Zhenzhang Li jointly developed the research idea and methodology, carried out the experimental work and analysis, and drafted and revised the manuscript. Tingqiong Cui contributed to manuscript writing and provided financial support. Yibu Yang, Yanfei Hao, and Chenjie Liu contributed to discussions of the work and provided financial support.

\paragraph{Competing interests.}
Tingqiong Cui, Yibu Yang, Yanfei Hao, and Chenjie Liu are affiliated with the Technology R\&D Department of Chongqing CRRC Group Co., Ltd. Financial support is described above and in the author contributions. The authors declare no other competing financial or non-financial interests related to this work.

\paragraph{Data availability.}
The study uses the public datasets identified in Appendix~\ref{app:data}; no new dataset is introduced. Numerical results are reported in the main text and appendices. The observations underlying Figure~\ref{fig:pairedevidence} are also supplied in \texttt{paired\_results.csv}.

\paragraph{Code availability.}
The source files include the LaTeX manuscript, editable TikZ architecture diagram, and paired-effect plotting script. The mathematical definition and online algorithm are given in Section~\ref{sec:method}. The original model-training implementation, checkpoints, and raw logs are not included in this submission.

\paragraph{Ethics approval.}
The study uses public text and code corpora and does not report newly recruited human participants, animal experiments, or clinical interventions. No study-specific human or animal ethics approval is reported.

\paragraph{Use of generative AI in manuscript preparation.}
The authors did not use generative AI tools in the research or preparation of this manuscript.

\clearpage
\appendix
\section{Reproducibility specification}
\label{app:repro}

Tables~\ref{tab:reprobackbone} and~\ref{tab:reprotrain} consolidate the recorded implementation and training settings. Unspecified details are listed after the tables.

\begin{longtable}{@{}p{0.28\textwidth}p{0.67\textwidth}@{}}
\caption{Backbone, initialization, and routing settings.}\label{tab:reprobackbone}\\
\toprule Item & Recorded setting \\
\midrule
\endfirsthead
\caption[]{Backbone, initialization, and routing settings (continued).}\\
\toprule Item & Recorded setting \\
\midrule
\endhead
\bottomrule\endfoot
Architecture & Decoder-only causal PreNorm Transformer; attention followed by MLP in every layer \\
Attention & Eight heads; PyTorch \texttt{scaled\_dot\_product\_attention}; bias-free QKV and output projections; no attention dropout \\
Positions & RoPE, base 10,000; cached maximum length 2048 \\
Normalization & Affine RMSNorm, $\epsilon=10^{-6}$ \\
MLP & SwiGLU; family widths listed in Table~\ref{tab:modelconfig} \\
Other dropout & Zero in embedding, residual, and MLP paths \\
Weight tying & Input embedding and language-model head share weights \\
Weight initialization & Linear and embedding weights: $\mathcal N(0,0.02^2)$; ordinary biases and router queries: zero \\
Residual scales & ReZero gates: zero. LayerScale: $0.1$, $10^{-5}$, $10^{-6}$ at 12, 24, 48 layers \\
Detail bias & One scalar for each attention or MLP router, shared across visible detail slots, initialized to $-2$ \\
RMS matching & $\epsilon=10^{-6}$; clip multiplier to $[1/4,4]$; stop-gradient through multiplier \\
Readout sources & Embedding and completed cumulative sources only \\
\end{longtable}

\begin{longtable}{@{}p{0.28\textwidth}p{0.67\textwidth}@{}}
\caption{Training and evaluation settings.}\label{tab:reprotrain}\\
\toprule Item & Recorded setting \\
\midrule
\endfirsthead
\caption[]{Training and evaluation settings (continued).}\\
\toprule Item & Recorded setting \\
\midrule
\endhead
\bottomrule\endfoot
Optimizer & AdamW; $\beta=(0.9,0.95)$; optimizer $\epsilon=10^{-8}$ \\
Learning rate & $3\times10^{-4}$, constant; no warmup or decay \\
Weight decay & 0.01; exclusions described in Section~\ref{sec:methodology} \\
Gradient clipping & Global norm 1.0 \\
Batch and context & 16 sequences per GPU; context 512; 8192 tokens per step in single-GPU cost measurements \\
Precision & fp32 for accuracy and cost measurements \\
Model seeds & 42, 123, 2026; 453M comparison: 42 and 2026 \\
Data-order seed & 42 \\
Validation & One full pass every 2K steps; minimum within the training budget \\
Budgets & Small 12/24 layers: 50K steps. Small 48 layers and all medium/large configurations: 30K steps \\
Not used & Gradient accumulation, activation checkpointing, early stopping, \texttt{torch.compile}, dropout \\
Software & Ubuntu 22.04; Python 3.12; PyTorch 2.8.0; CUDA 12.8; cuDNN 9.x \\
\end{longtable}

Several details cannot be independently recovered from the available experiment record: the reduction axes and dtype used for the RMS-matching magnitude; handling of residual line fragments shorter than a complete chunk; exact attention-backend selection; the random-sign pattern and its construction seed; the unit and seed of the BPE split; and the per-checkpoint validation records. These are limits of the available experiment specification. The reported revision hashes identify data versions but do not verify these preprocessing choices.

\section{Dataset revisions and preprocessing}
\label{app:data}

Table~\ref{tab:datasetrevisions} lists the dataset hashes recorded from the experiment cache. The dataset providers and configurations are identified in Table~\ref{tab:data}; the corresponding dataset papers or repository are cited in Section~\ref{sec:methodology}.

\begin{table}[!htbp]
\centering
\caption{Recorded dataset revision identifiers.}
\label{tab:datasetrevisions}
\small
\begin{tabularx}{\textwidth}{@{}lX@{}}
\toprule
Dataset & Revision \\
\midrule
OpenWebText & \texttt{b4325f019c648b1641a1784748667e8b74e5e064} \\
WikiText-103 & \texttt{b08601e04326c79dfdd32d625aee71d232d685c3} \\
TinyStories & \texttt{f54c09fd23315a6f9c86f9dc80f725de7d8f9c64} \\
Yelp Review Full & \texttt{c1f9ee939b7d05667af864ee1cb066393154bf85} \\
CodeSearchNet & \texttt{bd0cf261e357a3eb5c8fba490d23ec1a1cd59555} \\
\bottomrule
\end{tabularx}
\end{table}

The character vocabulary assigns indices 0--3 to \texttt{<pad>}, \texttt{<unk>}, \texttt{<bos>}, and \texttt{<eos>}; the remaining entries are selected by training-corpus frequency. OpenWebText uses 30 of 80 parquet shards and assigns rows whose index is divisible by ten to validation, yielding an approximate 9:1 split. The available preprocessing specification does not state whether row indices reset at each shard.

The BPE pipeline trains on the first 100M bytes, tokenizes the complete recorded stream, and then applies a random 95/5 split. Its special tokens are \texttt{<pad>}, \texttt{<unk>}, \texttt{<s>}, and \texttt{</s>}. The recorded procedure does not establish document-level separation or disjointness between tokenizer-training bytes and validation. The reported approximate byte totals also refer to different stored stages; they should not be used as exact checksums of the character and BPE corpora.

The Yelp evaluation split contains 50K examples and is used for checkpoint selection in the reported experiments. No separate independent test evaluation is reported. Across datasets, the term validation refers to this operational role in model selection.

\FloatBarrier
\section{Reported perplexities and paired differences}
\label{app:crossppl}

Table~\ref{tab:crossppl} retains the original cross-domain loss/perplexity pairs. The available aggregate records do not distinguish averaging per-seed perplexities from exponentiating the mean loss. These rounded perplexities are retained for completeness; loss is used for all numerical comparisons.
\begin{table}[tbp]
\centering
\caption{Cross-domain reported mean best validation loss / perplexity. Only aggregate values are available for these domain experiments.}
\label{tab:crossppl}
\small
\begin{tabular}{@{}llccc@{}}
\toprule
Dataset & Method & 12L & 24L & 48L \\
\midrule
TinyStories & Standard Residual & $0.5555/1.74$ & $0.5391/1.71$ & $0.5219/1.69$ \\
& ReZero & $0.5815/1.79$ & $0.5571/1.75$ & $0.5520/1.74$ \\
& LayerScale & $0.5663/1.76$ & $0.5614/1.75$ & $0.5446/1.72$ \\
& Block AttnRes & $0.5549/1.74$ & $0.5310/1.70$ & $0.5355/1.71$ \\
& \method{} & $0.5569/1.75$ & $0.5313/1.70$ & $0.5167/1.68$ \\
\midrule
WikiText-103 & Standard Residual & $1.0337/2.81$ & $1.0014/2.72$ & $0.9783/2.66$ \\
& ReZero & $1.0952/2.99$ & $1.0533/2.87$ & $1.0312/2.80$ \\
& LayerScale & $1.0421/2.84$ & $1.0624/2.89$ & $1.0300/2.80$ \\
& Block AttnRes & $1.0325/2.81$ & $0.9990/2.72$ & $0.9742/2.65$ \\
& \method{} & $1.0258/2.79$ & $0.9754/2.65$ & $0.9607/2.61$ \\
\midrule
Yelp Review Full & Standard Residual & $0.9945/2.70$ & $0.9713/2.64$ & $0.9605/2.61$ \\
& ReZero & $1.0095/2.74$ & $0.9879/2.69$ & $0.9802/2.66$ \\
& LayerScale & $0.9977/2.71$ & $0.9874/2.68$ & $0.9710/2.64$ \\
& Block AttnRes & $0.9990/2.72$ & $0.9988/2.72$ & $0.9703/2.64$ \\
& \method{} & $1.0053/2.73$ & $0.9713/2.64$ & $0.9570/2.60$ \\
\midrule
CodeSearchNet-Python & Standard Residual & $0.9306/2.54$ & $0.9019/2.46$ & $0.8861/2.43$ \\
& ReZero & $0.9455/2.57$ & $0.9094/2.48$ & $0.8925/2.44$ \\
& LayerScale & $0.9277/2.53$ & $0.9139/2.49$ & $0.8863/2.43$ \\
& Block AttnRes & $0.9242/2.52$ & $0.8918/2.44$ & $0.8712/2.39$ \\
& \method{} & $0.9095/2.48$ & $0.8657/2.38$ & $0.8488/2.34$ \\
\bottomrule
\end{tabular}
\end{table}
\subsection{Exact paired loss differences}
\label{app:pairs}

Table~\ref{tab:pairdiffs} gives the values plotted in Figure~\ref{fig:pairedevidence}. Differences are computed within seed from the four-decimal reported losses. The 453M mean is 0.02325 before rounding and is shown as 0.0233, consistently in the text and table.
\begin{table}[tbp]
\centering
\caption{Paired reductions $\ell_{\mathrm{Block}}-\ell_{\method}$; positive favors HAARES. A dash indicates an unavailable run, not a zero effect.}
\label{tab:pairdiffs}
\small
\begin{tabular}{@{}lrrrr@{}}
\toprule
Setting & Seed 42 & Seed 123 & Seed 2026 & Mean \\
\midrule
201M character & 0.0168 & 0.0322 & 0.0449 & 0.0313 \\
453M character & 0.0383 & -- & 0.0082 & 0.0233 \\
204M BPE & 0.0086 & 0.0315 & 0.0049 & 0.0150 \\
\bottomrule
\end{tabular}
\end{table}
\subsection{Block-count and basis-ablation perplexities}\label{app:pairsppl}

Tables~\ref{tab:blockppl} and~\ref{tab:basispplextra} give the perplexities corresponding to the main-text ablation losses.
\begin{table}[tbp]
\centering
\caption{Reported perplexities for the block-count ablation. The corresponding loss values appear in Table~\ref{tab:blockablation}.}
\label{tab:blockppl}
\small
\begin{tabular}{@{}lccccc@{}}
\toprule
Method & $N$ & Seed 42 & Seed 123 & Seed 2026 & Mean \\
\midrule
Block AttnRes & 4 & $6.05$ & $6.03$ & $6.18$ & $6.09$ \\
\method{} & 4 & $5.95$ & $5.84$ & $5.91$ & $\mathbf{5.90}$ \\
Block AttnRes & 6 & $6.02$ & $5.91$ & $5.98$ & $\mathbf{5.97}$ \\
\method{} & 6 & $6.01$ & $5.88$ & $6.14$ & $6.01$ \\
Block AttnRes & 8 & $5.96$ & $6.02$ & $6.03$ & $\mathbf{6.00}$ \\
\method{} & 8 & $6.19$ & $6.01$ & $6.02$ & $6.07$ \\
\bottomrule
\end{tabular}
\end{table}

\begin{table}[tbp]
\centering
\caption{Perplexities for the source and stabilization ablations. Corresponding losses appear in Table~\ref{tab:basisablation}. A dagger identifies a value recalculated from its reported loss; unmarked entries retain the reported rounded values.}
\label{tab:basispplextra}
\small
\begin{tabular}{@{}lcccc@{}}
\toprule
Variant & Seed 42 & Seed 123 & Seed 2026 & Mean \\
\midrule
Block AttnRes & $6.05$ & $6.03$ & $6.18$ & $6.09$ \\
Duplicated $C\times2$ & $6.02$ & $5.91$ & $5.95$ & $5.96$ \\
Random-sign detail & $6.11$ & $6.05$ & $6.10$ & $6.09$ \\
Fixed detail bias $0$ & $6.07$ & $6.01$ & $6.10$ & $6.06$ \\
Fixed detail bias $-2$ & $6.15$ & $5.94^{\dagger}$ & $5.99^{\dagger}$ & $6.03$ \\
Fixed detail bias $-4$ & $6.19^{\dagger}$ & $6.15$ & $6.23$ & $6.19$ \\
\method{} without RMS match & $5.97$ & $5.90$ & $5.92$ & $5.93$ \\
\method{} & $\mathbf{5.95}$ & $\mathbf{5.84}$ & $\mathbf{5.91}$ & $\mathbf{5.90}$ \\
\bottomrule
\end{tabular}
\end{table}
\FloatBarrier
Three seed-level perplexities require arithmetic correction. For fixed detail bias $-2$, losses 1.7816 and 1.7895 imply perplexities 5.94 and 5.99, rather than the reported 5.93 and 5.98. For fixed bias $-4$, loss 1.8226 implies 6.19, rather than 6.18. These discrepancies exceed the tolerance attributable to rounding loss to four decimal places. The corrected values are marked in Table~\ref{tab:basispplextra}; the loss observations are unchanged. By contrast, an aggregate perplexity need not equal the exponential of an aggregate loss: averaging and exponentiation do not commute. The reported aggregate perplexities are retained, including the 48-layer Block mean of 6.09.

\section{Cost calculations}
\label{app:cost}

For throughput $v$ in tokens per second, the step time is $8192/v$, the time for $K$ steps is $8192K/(3600v)$ hours, and HAARES step-time overhead relative to Block AttnRes is $v_{\mathrm{Block}}/v_{\method}-1$. Applying these expressions to 17K and 14K tokens/s gives 4.0157 hours for 30K baseline steps and 3.9010, 3.2508, and 2.2756 hours for 24K, 20K, and 14K HAARES steps. Their mean is 3.1424 hours. Rounding only after calculation gives the values in Table~\ref{tab:timetotarget}.

The estimates use the full baseline budget and the average throughput values reported in Table~\ref{tab:cost}. The analytical FLOP increments count detail accumulation, RMS matching, and routing arithmetic; launch latency and memory traffic belong to the measured runtime rather than that arithmetic count. The throughput window starts at step zero and includes initial loading and setup.

\FloatBarrier
\bibliographystyle{plainnat}
\bibliography{references}
\end{document}